\newtheorem{theorem}{Theorem}[section]
\theoremstyle{plain}
\theoremstyle{definition}
\newtheorem{assumption}[theorem]{Assumption}
\theoremstyle{remark}
\newtheorem{remark}[theorem]{Remark}
\newcommand\extrafootertext[1]{%
    \bgroup
    \renewcommand\thefootnote{\fnsymbol{footnote}}%
    \renewcommand\thempfootnote{\fnsymbol{mpfootnote}}%
    \footnotetext[0]{#1}%
    \egroup
}
\title{Normalization Layers Are All That \\
Sharpness-Aware Minimization Needs 
}
\author{%
  \hspace*{-0.8cm} Maximilian Müller\\ 
  \hspace*{-0.8cm} University of Tübingen \\
  \hspace*{-0.8cm} and Tübingen AI Center \\
  \hspace*{-1cm} \small{\texttt{maximilian.mueller@wsii.uni-tuebingen.de}} \\
  \And 
  \hspace*{-1.2cm} Tiffany Vlaar \\
  \hspace*{-1.2cm} McGill University and\\
  \hspace*{-1.2cm} Mila - Quebec AI Institute \\
  \hspace*{-1.2cm} \small{\texttt{tiffany.vlaar@mila.quebec}} \\
  \AND 
  \hspace*{0.8cm} 
  David Rolnick \\
  \hspace*{0.8cm} 
  McGill University and \\
 \hspace*{0.8cm} 
  Mila - Quebec AI Institute \\
 \hspace*{0.8cm}  \small{\texttt{drolnick@cs.mcgill.ca}} \\
  \And 
   Matthias Hein \\
  University of Tübingen \\
  and Tübingen AI Center \\
  \small{\texttt{matthias.hein@uni-tuebingen.de}}\\
}
\begin{document}

\maketitle

\begin{abstract}
Sharpness-aware minimization (SAM) 
    was proposed to reduce sharpness of minima and has been shown to enhance generalization performance in various settings. 
    In this work we show that perturbing only the affine normalization parameters (typically comprising 0.1\% of the total parameters) in the adversarial step of SAM
    can outperform perturbing all of the parameters.
    This finding generalizes to 
    different SAM variants and
    both ResNet (Batch Normalization) and Vision Transformer (Layer Normalization) architectures.
     We consider alternative sparse perturbation approaches and find that 
     these do not achieve similar performance enhancement at such extreme sparsity levels, showing that this behaviour is unique to the normalization layers.
    Although our findings reaffirm the effectiveness of SAM in improving generalization performance, they cast doubt on whether this is solely caused by reduced sharpness. \extrafootertext{Code is provided at \href{https://github.com/mueller-mp/SAM-ON}{https://github.com/mueller-mp/SAM-ON}.}
\end{abstract}

\section{Introduction}
Numerous works have been dedicated to studying the potential connection between
flatness of minima and
generalization performance of deep neural networks \cite{Keskar2017, Dinh2017, Dziugaite2017, Petzka2021,Andriushchenko2023}. Several aspects of training are thought to affect sharpness, but how these interact with each other remains an ongoing area of research. Recently, sharpness-aware minimization (SAM) has become a popular approach %
to actively try to find minima with low sharpness using a min-max type algorithm \cite{Foret2021}. SAM was found to be remarkably effective in enhancing generalization performance for various settings 
\cite{Foret2021,Chen2022,Bahri2022, Abbas2022,Kaddour2022}.

Several variants of SAM have been proposed, focusing both on enhanced performance \cite{Kwon2021,Kim2022} and reduced computational cost \cite{Brock2021, Du2022}. 
In particular, with the original aim of making SAM more efficient 
\citet{Mi2022} propose a `sparse' SAM approach, which applies SAM only to a select number of parameters. Although they do not actually succeed in reducing wall-clock time, %
they make the surprising observation that using 50\% (in some settings even up to 95\%) sparse perturbations %
can maintain or even enhance performance compared to applying SAM to all parameters. They thus hypothesize that ``complete perturbation on all parameters will result in suboptimal minima''. 

Similar to the effect of SAM \cite{Foret2021, Chen2022}, normalization layers are thought to reduce sharpness \cite{Lyu2022,Santurkar2018}. \citet{Frankle2021} found for ResNets that the trainable affine parameters of the normalization layers 
have remarkable representation capacity in their own right, whereas disabling them can reduce performance. 
Inspired by this %
we focus on the interplay between SAM and normalization layers and show that for various settings
perturbing exclusively the normalization layers of a network (often less than 0.1\% of the total parameters) outperforms perturbing all parameters. We find that this can not be solely attributed to possible benefits of sparse perturbation approaches and highlight the unique role played by the normalization layer affine parameters.
As our main contributions we show that: %
\begin{itemize}[leftmargin=3mm]
    \item Applying SAM only to the normalization layers of a network (\textit{SAM-ON}, short for \textit{SAM-OnlyNorm})
    {enhances performance on CIFAR data compared to applying SAM to the full network (\textit{SAM-all}) and also performs competitively on ImageNet. We corroborate the remarkable generalization performance of \textit{SAM-ON}}
    for ResNet and Vision Transformer architectures, 
    across different SAM variants, and for different batch sizes.
    \textit{(Section \ref{sec:samon})} 
     \item  Alternative sparse perturbation approaches 
     do not result in similar performance as \textit{SAM-ON}{, especially not at the extreme sparsity levels of our method}.
    \textit{(Section \ref{sec:sparsity})}

    \item %
    {Similar to \textit{SAM-all}, \textit{SAM-ON} yields non-trivial adversarial robustness (Section~\ref{sec:layernorm+vits}). It also reduces the feature-rank, but the sharpness-reducing qualities of \textit{SAM-all} are not fully preserved (Section \ref{sec:sharpness}). }

\end{itemize}

\begin{figure}
    \centering
    \vspace{-0.1cm}
    \includegraphics[width=1.\textwidth]{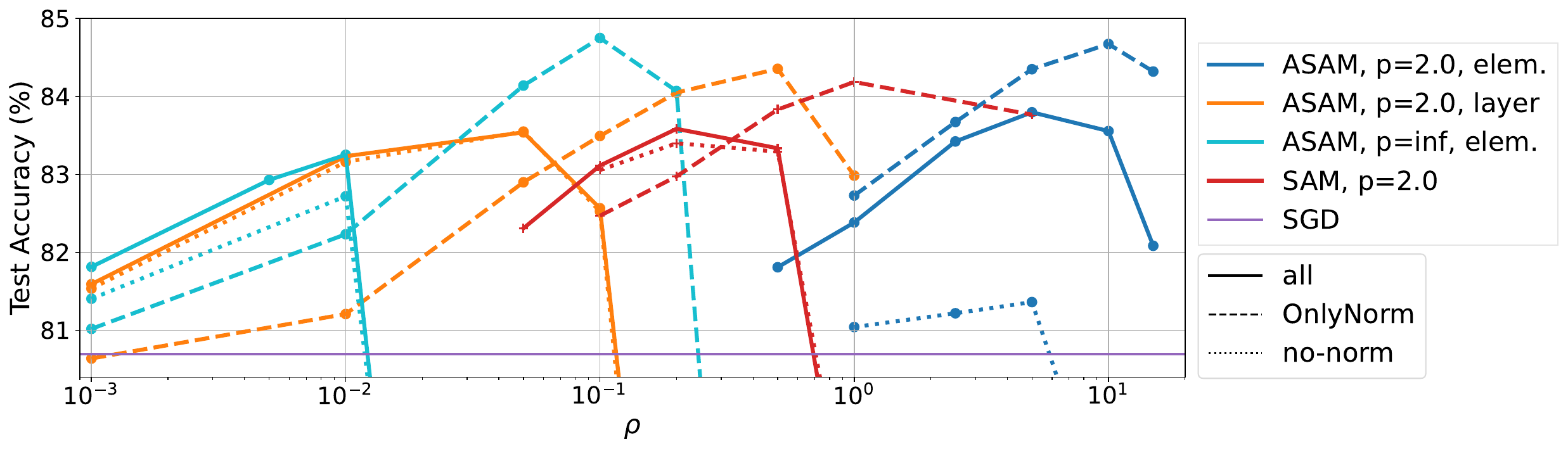}
    \caption{\textbf{The interplay of normalization layers with SAM:} Perturbing \textit{only} normalization layers (OnlyNorm, dashed) improves generalization performance, while omitting them in the perturbation (no-norm, dotted) can harm training.  
    WideResNet-28-10 trained with different SAM-variants on CIFAR-100. Best seen in color.}
    \label{fig:teaser}
\end{figure}

 \section{Related Work}\label{sec:related}
\textbf{Normalization layers.}
Batch Normalization (BatchNorm) \cite{Ioffe2015} and Layer Normalization (LayerNorm) \cite{Ba2016} form an essential component of most convolutional \cite{He2016,Huang2017} and Transformer \cite{Vaswani2017,Dosovitskiy2021} architectures, respectively. Across various works these normalization layers were shown to accelerate and stabilize training, reducing sensitivity to initialization and learning rate \cite{Chen2018,Arora2019,Zhang2019,Kohler2019}. But despite their widespread adoption and illustrated effectiveness, a conclusive explanation for their success is still elusive. The original motivation for BatchNorm as reducing internal covariance shift \cite{Ioffe2015} has been disputed \cite{Santurkar2018}. The hypothesis that normalization layers enhance smoothness is supported through both empirical and theoretical analyses \cite{Santurkar2018,Bjorck2018,Lyu2022}, though also not completely undisputed \cite{Yao2020}. Unlike LayerNorm, BatchNorm is sensitive to the choice of batch size \cite{Lian2019,Summers2020}. Ghost BatchNorm, where BatchNorm statistics from disjoint subsets of the batch are used, is found to regularize and generally enhance generalization \cite{Hoffer2017,Summers2020} even though it reduces smoothness \cite{Dimitriou2020}.

\textbf{Affine parameters.} There are relatively few papers that study the role of the trainable affine parameters of the normalization layers. \citet{Frankle2021} were able to obtain surprisingly high performance on vision data by only training the BatchNorm layers, illustrating the expressive power of the affine parameters, which potentially achieve this by sparsifying activations. For BatchNorm in ResNets, disabling the affine parameters was shown empirically to reduce generalization performance \cite{Frankle2021}, 
but for LayerNorm in Transformers %
to not affect or even %
improve performance \cite{Xu2019}. For few-shot transfer tasks disabling the BatchNorm affine parameters during pretraining was found to enhance performance \cite{Yazdanpanah2022}. Further, many other aspects of training will have a non-trivial effect, e.g. applying weight decay to the BatchNorm affine parameters was found to increase performance for ResNets but harm performance in other settings \cite{Summers2020}. 

\textbf{Sharpness-aware minimization.} SAM was developed to try to actively seek out minima with low sharpness \cite{Foret2021}. 
Training with SAM may lead to increased sparsity of active neurons \cite{Chen2022} 
and models which are more compressible \cite{Na2022}. 
SAM has been shown to be effective in enhancing generalization performance in various settings, but also increases the computational overhead compared to base optimizers \cite{Foret2021,Chen2022,Bahri2022}. Hence there have been several approaches to try to reduce the computational cost of SAM, such as ESAM which utilizes sharpness-sensitive data selection and perturbs only a randomly selected fraction of parameters \cite{Du2022}. Related work shows that ``only employing 20\% of the batch to compute the gradients for the ascent step, ... [can] result in equivalent performance'' \cite{Brock2021} and that only applying SAM to part of the parameters (SSAM) using e.g.~a Fisher-information mask can lead to enhanced performance \cite{Mi2022}. A common variant of SAM utilizes $m$-sharpness \cite{Foret2021}, which uses subbatches of size $m$ and %
benefits performance %
\cite{Du2022,Andriushchenko2022,Mollenhoff2023} though nuances in its implementation vary \cite{Behdin2023}. \citet{Andriushchenko2022} argue that its success is not unique to settings with BatchNorm and hence cannot be attributed to the Ghost BatchNorm effect. 
\citet{andriushchenko2023lowranksam} further show that SAM leads to low-rank features.
We discuss 
different SAM variants in Section \ref{sec:sam}. 

\section{Background: SAM and Normalization Layers}\label{sec:methodology} 
In this paper, we focus on the interplay between two popular aspects of neural network training, both of which we recapitulate here: In Sec.~\ref{sec:Norm} we provide an overview of normalization layers, in particular BatchNorm and LayerNorm, and in Sec.~\ref{sec:sam} of Sharpness-Aware Minimization variants.

\subsection{BatchNorm and LayerNorm}\label{sec:Norm}  Modern neural network architectures typically incorporate normalization layers. In this work we will focus on Batch Normalization (BatchNorm) \cite{Ioffe2015} and Layer Normalization (LayerNorm) \cite{Ba2016}, which are an essential building block of most convolutional \cite{He2016,Huang2017} and transformer \cite{Vaswani2017,Dosovitskiy2021} architectures, respectively. Normalization layers transform an input $\mathbf{x}$ according to
    \begin{align}
        {N}(\mathbf{x}) =\gamma \times  \frac{\mathbf{x}- \mu}{\sigma}+\beta\label{eq:norm}
    \end{align}
where $\mu$ and $\sigma^{2}$ are the mean and variance, which are computed over the batch dimension in the case of BatchNorm, or over the embedding dimension, in the case of LayerNorm. BatchNorm is therefore sensitive to the choice of batch size \cite{Lian2019,Summers2020}. For BatchNorm, $\mu$ and $\sigma$ are computed from the current batch-statistics during training, and running estimates are used at test time. In our experiments, we focus on the trainable parameters $\gamma$ and $\beta$, which perform an affine transformation of the normalized input.  

\subsection{SAM and its variants}\label{sec:sam}
We recapitulate SAM \cite{Foret2021}, ASAM \cite{Kwon2021} and Fisher-SAM \cite{Kim2022} with their respective perturbation models. To this end, we consider a neural network $f_\mathbf{w}:\mathbb{R}^{d}\longrightarrow \mathbb{R}^{k}$ which is parameterized by a vector $\mathbf{w}$ as our model. The training dataset $S^{{train}}=\{(\mathbf{x}_{1}, \mathbf{y}_{1}),...(\mathbf{x}_{n}, \mathbf{y}_{n})\}$  consists of input-output pairs which are drawn from the data distribution $D$ and we write the loss function as $l:\mathbb{R}^{k}\times \mathbb{R}^{k}\longrightarrow \mathbb{R}_{+}$. The goal is to learn a model $f_\mathbf{w}$ with good generalization performance, i.e. low expected loss $L_{D}(\mathbf{w})=\mathbb{E}_{(\mathbf{x}, \mathbf{y})\sim D}[l(\mathbf{y}, f_{\mathbf{w}}(\mathbf{x}))]$ on the distribution $D$. The training loss can be written as $L(\mathbf{w})=\frac{1}{n}\sum_{i=1}^{n}l(\mathbf{y}_{i}, f_{\mathbf{w}}(\mathbf{x}_{i}))$. Conventional SGD-like optimization methods minimize (a regularized version of) $L$ by stochastic gradient descent. SAM aims at additionally minimizing the worst-case sharpness of the training loss in a neighborhood defined by an ${\ell}_{p}$ ball around $\mathbf{w}$, i.e. $\max_{||\mathbf{\epsilon}||_{p} \leq \rho}{L}(\mathbf{w+\mathbf{\epsilon}})-{L}(\mathbf{w})$. This leads to the overall objective
\begin{equation}\label{SAMobjective}
    \min_{\mathbf{w}}\max_{||\mathbf{\epsilon}||_{p}\leq\rho}{L}(\mathbf{w+\mathbf{\epsilon}}).
\end{equation}
In practice, SAM uses $p=2$ and approximates the inner maximization by a single gradient step, yielding $\mathbf{\epsilon}=\rho\nabla{L(\mathbf{w})}/||\nabla{L(\mathbf{w})}||_{2}$ and requiring an additional forward-backward pass compared to SGD. The gradient is then re-evaluated at the perturbed point $\mathbf{w}+\mathbf{\epsilon}$, giving the actual weight update
\begin{equation}\label{updateStep}
    \mathbf{w}\longleftarrow\mathbf{w}-\alpha\nabla{L(\mathbf{w}+\mathbf{\epsilon})}
\end{equation}
with learning rate $\alpha$. Computing $\epsilon$ separately for the batch of each GPU in multi-GPU settings and then averaging the resulting perturbed gradients for the update step in Eq.~\eqref{updateStep} has been shown to increase SAM's performance \cite{Foret2021}. This method is called $m$-sharpness, with $m$ being the number of samples on each GPU. Since the perturbation model in Eq.~\eqref{SAMobjective} is not invariant with respect to a rescaling of the weights that leaves $f_{\mathbf{w}}$ invariant \citep{Dinh2017}, ASAM  \cite{Kwon2021}, a partly scale-invariant version of SAM, was proposed, with the objective
\begin{equation}\label{ASAMobjective}
    \min_{\mathbf{w}}\max_{||{T}_{w}^{-1}\mathbf{\epsilon}||_{p} \leq\rho}{L}(\mathbf{w+\mathbf{\epsilon}})
\end{equation}
where ${T}_{w}$ is a normalization operator, making the perturbation adaptive to the scale of the network parameters. \citet{Kwon2021} choose $T_w$ to be diagonal with entries ${T}_{w}^{i}=|w_{i}|+\eta$ for weight parameters and ${T}_{w}^{i}=1$ for bias parameters, called \textit{elementwise} normalization. $\eta$ is typically set to $0.01$. As with SAM, the inner maximization is solved by a single gradient step:
\begin{align}\label{SAMsolution}
    \mathbf{\epsilon}_{2}&=\rho\frac{{T}_{w}^{2}\nabla{L(\mathbf{w})}}{||{T}_{w}\nabla{L(\mathbf{w})}||_{2}} \text{ for $p=2$},\quad\quad\quad
    \mathbf{\epsilon}_{\infty}=\rho T_{w}\text{sign}\big(\nabla{L(\mathbf{w})}\big) \text{ for $p=\infty$.}
\end{align}
We note that for $T_w$ equal to the identity matrix and $p=2$, this is equivalent to the original SAM formulation. Recently, \citet{Kim2022} proposed to use a distance metric induced by the Fisher information instead of a Euclidean distance measure between parameters. The approach can also be framed as a variant of ASAM, with $T_w$ being diagonal with entries ${T}_{w}^{i}=1/\sqrt{1+\eta f_{i}}$ and $f_{i}$ approximating the $i^{th}$ diagonal entry of the Fisher-matrix by the squared average batch-gradient, $f_{i}=\left(\partial_{w_i}L_{Batch}{(\mathbf{w})}\right)^{2}$. For our experiments, we additionally employ layerwise normalization. This is, we set the diagonal entries of $T_{w}^i=||\mathbf{W}_{\text{layer}[i]}||_{2}$, which corresponds to a normalization with respect to the $\ell_{2}$-norm of a layer, similar to \cite{liu2022looksam}. 

\section{SAM-ON: Perturbing Only the Normalization Layers}\label{sec:samon}
We study the effect of applying SAM (and its variants) solely to the normalization layers of a considered model. 
We will refer to this approach as \textit{SAM-ON} (SAM-OnlyNorm) throughout this paper and provide a convergence analysis for \textit{SAM-ON} in Appendix \ref{app:conv}. We find that \textit{SAM-ON} obtains enhanced generalization performance compared to conventional SAM (denoted as \textit{SAM-all}) for ResNet architectures with BatchNorm (Section \ref{sec:SAMBN}) and Vision Transformers with LayerNorm (Section \ref{sec:layernorm+vits}) on CIFAR data {and performs competitively on ImageNet}. For comparison, we also study the reverse of \textit{SAM-ON}, i.e.~we exclude the affine normalization parameters from the adversarial SAM-step, which we shall refer to as \textit{no-norm}. 

\textbf{Training set-up.} 
We use SGD with momentum, weight decay, and cosine learning rate decay as our base optimizer for ResNet architectures and employ label smoothing to adopt similar settings as in the literature \cite{Kwon2021}. For Vision Transformers we employ AdamW \cite{loshchilov2019adamw} as our base optimizer on CIFAR and for ImageNet we additionally use Lion \cite{chen2023lion}. We use both basic augmentations (random cropping and flipping) and strong augmentations (basic+AutoAugment, denoted as +AA). We consider a range of SAM-variants which differ either in the perturbation model ($\ell_2$ or $\ell_\infty$) or in the definition of the normalization operator. We train models %
for 200 epochs, and do not employ $m$-sharpness unless indicated otherwise. Complete training details are described in Appendix \ref{appx:training}.
\renewcommand{\arraystretch}{.7}
\begin{table}[h]
    \centering
    \caption{\textbf{\textit{SAM-ON} improves over \textit{SAM-all} {for BatchNorm and ResNets}}: {Test accuracy for} ResNet-like models on CIFAR-100. Bold values mark the better performance between \textit{SAM-ON} and \textit{SAM-all} within a SAM-variant, and \underline{underline} highlights the overall best method per model and augmentation.}
\setlength{\tabcolsep}{6pt}
\begin{tabular}{|l|l|cc | cc | cc |}

\hline
 & & \multicolumn{2}{|c|}{RN-56 \cite{He2016}} & \multicolumn{2}{|c|}{RNxT \cite{Xie2016}} & \multicolumn{2}{|c|}{WRN-28 \cite{Zagoruyko2016WRN}} \\ 
 
 & variant  & all & ON & all & ON & all & ON \\ 
 
\hline
\multirow{7}{*}{\rotatebox[origin=c]{90}{basic aug. \hspace*{0.4cm}}} & SGD & $72.82^{\pm 0.3}$ &  & $80.16^{\pm 0.3}$ &  & $80.71^{\pm 0.2}$ &  \\ 
 
 & SAM & $75.07^{\pm 0.6}$ & $\textbf{75.58}^{\pm 0.4}$ & $81.79^{\pm 0.4}$ & $\textbf{82.22}^{\pm 0.2}$ & $83.11^{\pm 0.3}$ & $\underline{\textbf{84.19}}^{\pm 0.2}$ \\ 
 
 & el. $\ell_2$ & $75.05^{\pm 0.1}$ & $\underline{\textbf{76.25}}^{\pm 0.0}$ & $81.26^{\pm 0.2}$ & $\textbf{82.30}^{\pm 0.3}$ & $82.38^{\pm 0.2}$ & $\textbf{83.67}^{\pm 0.3}$ \\ 
 
 & el. $\ell_2, $ orig.& $75.54^{\pm 0.7}$ & $\textbf{76.07}^{\pm 0.2}$ & $\textbf{82.15}^{\pm 0.3}$ & $81.90^{\pm 0.4}$ & $\textbf{83.67}^{\pm 0.1}$ & $83.53^{\pm 0.2}$ \\ 
 
 & el. $\ell_{\infty}$ & $75.36^{\pm 0.1}$ & $\textbf{76.10}^{\pm 0.2}$ & $81.02^{\pm 0.6}$ & $\textbf{82.38}^{\pm 0.3}$ & $83.25^{\pm 0.2}$ & $\textbf{84.14}^{\pm 0.2}$ \\

& Fisher & $75.01^{\pm 0.4}$ & $\textbf{75.65}^{\pm 0.1}$ & $81.55^{\pm 0.2}$ & $\textbf{82.21}^{\pm 0.2}$ & $83.37^{\pm 0.1}$ & $\textbf{84.01}^{\pm 0.1}$ \\ 
 
 & layer. $\ell_{2}$ & $74.63^{\pm 0.1}$ & $\textbf{76.03}^{\pm 0.3}$ & $81.66^{\pm 0.2}$ & $\underline{\textbf{82.52}}^{\pm 0.2}$ & $83.23^{\pm 0.2}$ & $\textbf{84.05}^{\pm 0.2}$ \\ 
 
\hline
\multirow{7}{*}{\rotatebox[origin=c]{90}{basic aug. + AA}} & SGD & $75.26^{\pm 0.2}$ &  & $80.31^{\pm 0.3}$ &  & $83.62^{\pm 0.1}$ &  \\ 
 
 & SAM & $\textbf{76.33}^{\pm 0.3}$ & $76.02^{\pm 0.3}$ & $82.33^{\pm 0.5}$ & $\textbf{83.19}^{\pm 0.2}$ & $85.30^{\pm 0.1}$ & $\textbf{85.42}^{\pm 0.1}$ \\ 
 
 & el. $\ell_2$ & $\textbf{76.51}^{\pm 0.1}$ & $76.04^{\pm 0.3}$ & $82.00^{\pm 0.3}$ & $\textbf{83.20}^{\pm 0.1}$ & $84.80^{\pm 0.3}$ & $\textbf{85.43}^{\pm 0.3}$ \\ 
 
 & el. $\ell_2, $ orig.& $76.49^{\pm 0.2}$ & $\textbf{76.58}^{\pm 0.4}$ & $82.78^{\pm 0.1}$ & $\textbf{82.87}^{\pm 0.3}$ & $85.25^{\pm 0.4}$ & $\textbf{85.41}^{\pm 0.1}$ \\ 
 
 & el. $\ell_{\infty}$ & $74.89^{\pm 0.4}$ & $\textbf{76.19}^{\pm 0.4}$ & $82.33^{\pm 0.1}$ & $\textbf{83.11}^{\pm 0.2}$ & $85.28^{\pm 0.1}$ & $\textbf{85.46}^{\pm 0.1}$ \\

 & Fisher  & $\textbf{76.67}^{\pm 0.1}$ & $76.25^{\pm 0.2}$ & $82.56^{\pm 0.3}$ & $\textbf{83.28}^{\pm 0.4}$ & $85.09^{\pm 0.3}$ & $\textbf{85.35}^{\pm 0.1}$ \\ 
 
 & layer. $\ell_{2}$ & $76.23^{\pm 0.5}$ & $\underline{\textbf{76.93}}^{\pm 0.4}$ & $82.61^{\pm 0.3}$ & $\underline{\textbf{83.32}}^{\pm 0.2}$ & $85.32^{\pm 0.3}$ & $\underline{\textbf{85.95}}^{\pm 0.1}$ \\ 
 
\hline
\end{tabular}
    \label{tab:resnet-cifar100}
\end{table}
\subsection{BatchNorm and ResNet}\label{sec:SAMBN} 

\textbf{CIFAR.}  
We showcase the effect of \textit{SAM-ON}, i.e. only applying SAM to the BatchNorm parameters, for a WideResNet-28-10 (WRN-28) on CIFAR-100 in Figure \ref{fig:teaser}.  
We observe that \textit{SAM-ON} obtains higher accuracy than conventional SAM (\textit{SAM-all}) for all SAM variants considered (more SAM-variants are shown in Figure \ref{fig:wrn-cifar100-allvariants} in the Appendix). 
In contrast, excluding the affine BatchNorm parameters from the adversarial step (\textit{no-norm}) either significantly decreases performance (for elementwise-$\ell_2$ variants) or maintains similar performance as \textit{SAM-all} (for all other variants). 
For variants which do {not} experience a performance drop for \textit{no-norm}, the ideal SAM perturbation radius $\rho$ shifts towards larger values, indicating that the perturbation model cannot perturb the BatchNorm parameters enough when \textit{all} parameters are used. 
{To study if the benefits of only perturbing the normalization layers extends to other settings}, we train more ResNet-like models on CIFAR-10 and CIFAR-100. For each SAM-variant and dataset, we probe a set of pre-defined $\rho$-values (shown in Table \ref{tab:rho_tuned_rhovalues} in the Appendix) with a ResNet-56 (RN-56) and fix the best-performing $\rho$ for the other models to compare \textit{SAM-ON} to \textit{SAM-all}. We report mean accuracy and standard deviation over 3 seeds for CIFAR-100 in Table \ref{tab:resnet-cifar100}. On average, \textit{SAM-ON} outperforms \textit{SAM-all} for all considered SAM-variants. Of these, layerwise-$\ell_2$ achieves the highest performance for most settings. We obtain similar results for CIFAR-10 (App. Table \ref{tab:resnet-cifar10}) and for more network architectures (App. Table~\ref{tab:more-RN-models}). 

\textbf{ImageNet.}
For ImageNet, we adopt the timm training script \cite{rw2019timm}. We train a ResNet-50 for 100 epochs on eight 2080-Ti GPUs with $m=64$, leading to an overall batch-size of 512. Apart from $\rho$, all hyperparameters are shared for all SAM-variants and can be found in the Appendix in Table \ref{tab:imagenet-hyperparams}. 
We select the most promising \textit{SAM-ON} variants and compare them against the established methods (SGD, SAM, ASAM elementwise $\ell_2$).
The results are shown in Table \ref{tab:imagenet}. We observe that for layerwise $\ell_2$, the \textit{all} variant achieves higher accuracy, whereas the \textit{SAM-ON} models outperform their \textit{all} counterparts for elementwise $\ell_2$ and elementwise $\ell_\infty$. 
All \textit{SAM-ON} variants outperform the previously established methods (SGD, SAM, ASAM). For reference, we also show the values reported for ESAM \cite{Du2022} and GSAM \cite{Gsam}, two SAM-variants we did not include in our study.

\begin{table}[htb]
\renewcommand{\arraystretch}{1.}
\tabcolsep=0.155cm
       \caption{ImageNet top-1 accuracy for a ResNet-50. ESAM \citep{Du2022}  and GSAM \citep{Gsam} values are taken from the respective papers.}
\resizebox{1.\textwidth}{!}{
\setlength{\tabcolsep}{3.5pt}
\setlength{\tabcolsep}{1.pt}

 \begin{tabular}{|c|c|c|c| c c|c c| c c|}\hline
         SGD & SAM & ESAM & GSAM & \multicolumn{2}{c|}{elem. $\ell_{2}$}  &  \multicolumn{2}{c|}{elem. $\ell_{\infty}$} & \multicolumn{2}{c|}{layer $\ell_{2}$ }  \\ 
           & all & all & all & all & ON & all & ON & all & ON \\ \hline
         $77.03^{\pm0.13}$ & $77.65^{\pm0.11}$ & $77.05$ & ${77.20}$ & $77.65^{\pm 0.05}$ &  $\textbf{77.82}^{\pm 0.14}$ & $77.45^{\pm0.04}$ & $\textbf{77.82}^{\pm0.01}$ & $\underline{\textbf{78.14}}^{\pm 0.05} $ & $77.87^{\pm0.07}$ \\ 
         \hline
\end{tabular}

}
    \label{tab:imagenet}
\end{table}

\textbf{Varying the batchsize.} In Figure \ref{fig:ssam} (right) we report the performance of a WRN-28 on CIFAR-100 with \textit{SAM-ON} and \textit{SAM-all} for a range of batch-sizes and values of $m$, where $m$ is the batch-size per accelerator, as discussed in Section \ref{sec:sam}. Similar to the findings in \cite{Foret2021,Andriushchenko2022}, we confirm that lower values of $m$ lead to better performance within each batch-size. Importantly, %
\textit{SAM-ON} outperforms \textit{SAM-all} %
for all combinations of batch-size and $m$, illustrating that Ghost BatchNorm \cite{Hoffer2017,Summers2020} (see discussion in Section \ref{sec:related}) does not play a role in the success of \textit{SAM-ON}.

\subsection{LayerNorm and Vision Transformer}\label{sec:layernorm+vits}
\textbf{CIFAR.} 
To study the effectiveness of \textit{SAM-ON} beyond ResNet architectures and BatchNorm, we train ViTs from scratch on CIFAR data with AdamW as the base optimizer (Figure \ref{fig:cifar100-vits}). Although ResNet architectures are known to outperform Vision Transformers when trained from scratch on small-scale datasets like CIFAR, our aim here is not to outperform state-of-the-art, but rather to study if the benefits of \textit{SAM-ON} extend to substantially different training settings. Remarkably, we find that the same phenomena occur: The \textit{SAM-ON} variants outperform their conventional counterparts \textit{SAM-all} by a clear margin. For the elementwise-$\ell_2$ variants there is a strong drop in accuracy for \textit{no-norm}, whereas for the other SAM variants %
the optimal perturbation radius $\rho$ shifts towards larger values. We show that this extends to a ViT-T and CIFAR-10 as well (Table \ref{tab:vits-cifar100}).

\begin{figure}[h]
    \centering
    \includegraphics[width=1.\textwidth]{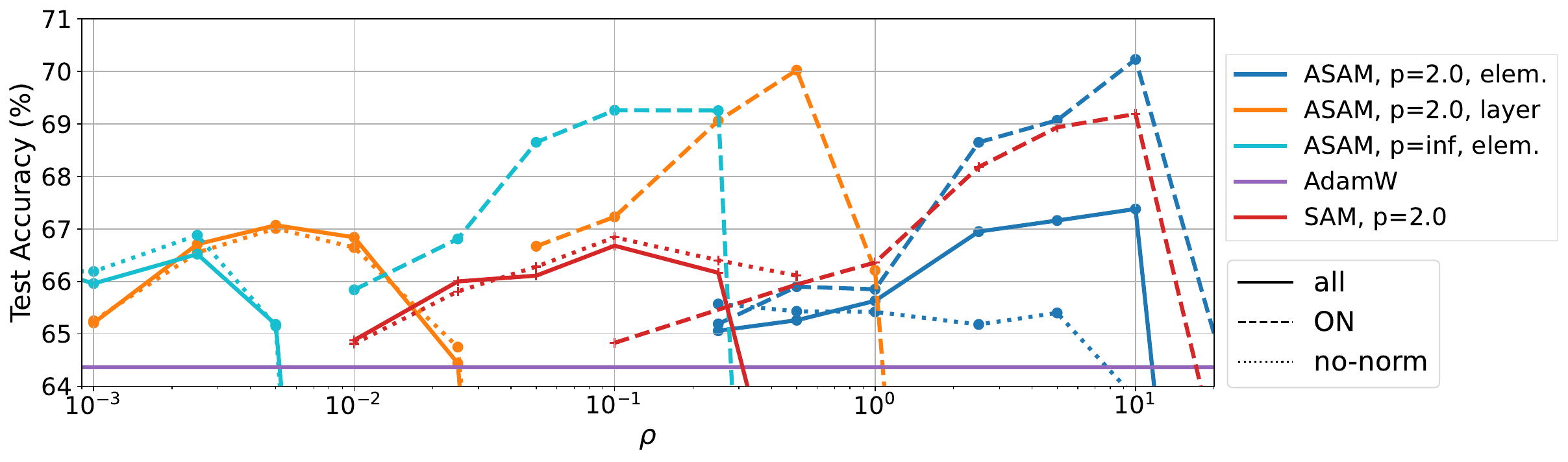}
    \caption{{\textbf{The interplay of normalization layers with SAM for ViT training }:} ViT-S trained with different SAM-variants on CIFAR-100. Like for ResNets, perturbing \textit{only} normalization layers (OnlyNorm) improves generalization performance, while omitting them in the perturbation (no-norm) can harm training.
    }
    \label{fig:cifar100-vits}
\end{figure}

\begin{table}[h]
    \centering
        \caption{\textbf{\textit{SAM-ON} improves over \textit{SAM-all} {for LayerNorm and ViTs}}: Shown are ViT models on CIFAR-10 and CIFAR-100. Bold values mark the better performance between \textit{SAM-ON} and \textit{SAM-all} within a SAM-variant, and \underline{underline} highlights the overall best method per model.}
\renewcommand{\arraystretch}{.7}
\setlength{\tabcolsep}{1.pt}

\begin{tabular}{|l|l| cc | cc | cc | cc |}

\hline
 &  & \multicolumn{4}{|c|}{CIFAR-10} & \multicolumn{4}{|c|}{CIFAR-100} \\ 
 
\hline
 &  & \multicolumn{2}{|c|}{ViT-T} & \multicolumn{2}{|c|}{ViT-S} & \multicolumn{2}{|c|}{ViT-T} & \multicolumn{2}{|c|}{ViT-S} \\ 
 
 & variant & all & ON & all & ON & all & ON & all & ON \\ 
 
\hline
\multirow{7}{*}{\rotatebox[origin=c]{90}{basic aug. + AA}} & AdamW & $89.19^{\pm 0.2}$ &  & $90.34^{\pm 0.0}$ &  & $63.79^{\pm 0.0}$ &  & $64.37^{\pm 0.2}$ &  \\ 
 
 & SAM & $88.92^{\pm 0.0}$ & $\textbf{92.22}^{\pm 0.1}$ & $90.81^{\pm 0.2}$ & $\textbf{92.71}^{\pm 0.0}$ & $64.70^{\pm 0.4}$ & $\textbf{69.84}^{\pm 0.2}$ & $66.58^{\pm 0.3}$ & $\textbf{69.13}^{\pm 0.3}$ \\ 
 
 & el. $\ell_2$ & $90.02^{\pm 0.2}$ & $\textbf{92.52}^{\pm 0.3}$ & $92.40^{\pm 0.4}$ & $\textbf{93.97}^{\pm 0.2}$ & $65.74^{\pm 0.6}$ & $\textbf{71.09}^{\pm 0.1}$ & $66.98^{\pm 0.0}$ & $\textbf{70.31}^{\pm 0.2}$ \\ 
 
 & el. $\ell_2,$ orig. & $90.22^{\pm 0.2}$ & $\textbf{92.68}^{\pm 0.2}$ & $92.05^{\pm 0.3}$ & $\textbf{94.01}^{\pm 0.5}$ & $65.81^{\pm 0.6}$ & $\underline{\textbf{71.25}}^{\pm 0.3}$ & $67.23^{\pm 0.2}$ & $\underline{\textbf{70.42}}^{\pm 0.5}$ \\ 
 
 & el. $\ell_{\infty}$ & $89.82^{\pm 0.3}$ & $\textbf{92.66}^{\pm 0.2}$ & $90.76^{\pm 0.6}$ & $\textbf{93.68}^{\pm 0.2}$ & $65.11^{\pm 0.3}$ & $\textbf{68.11}^{\pm 0.7}$ & $66.55^{\pm 0.1}$ & $\textbf{69.19}^{\pm 0.1}$ \\ 
 
 & Fisher & $89.08^{\pm 0.1}$ & $\textbf{92.03}^{\pm 0.2}$ & $91.13^{\pm 0.2}$ & $\textbf{92.49}^{\pm 0.1}$ & $64.70^{\pm 0.4}$ & $\textbf{69.55}^{\pm 0.8}$ & $66.59^{\pm 0.5}$ & $\textbf{69.30}^{\pm 0.4}$ \\ 
 
 & layer. $\ell_{2}$ & $89.51^{\pm 0.4}$ & $\underline{\textbf{93.08}}^{\pm 0.2}$ & $91.21^{\pm 0.1}$ & $\underline{\textbf{94.02}}^{\pm 0.1}$ & $65.30^{\pm 0.6}$ & $\textbf{69.66}^{\pm 0.1}$ & $67.39^{\pm 0.4}$ & $\textbf{70.04}^{\pm 0.2}$ \\ 
 
\hline
\end{tabular}

    \label{tab:vits-cifar100}
\end{table}

\begin{table}
\renewcommand{\arraystretch}{.7}
\setlength{\tabcolsep}{3pt}
    \caption{\textbf{SAM-ON performs well on ImageNet:} Training a ViT-S/32 from scratch.}

\begin{tabular}{l l l l l @{\hskip 0.25in} l l l l }

&&\multicolumn{3}{c}{AdamW} & \multicolumn{3}{c}{Lion} \\ 
&&vanilla & SAM-all & SAM-ON & vanilla & SAM-all & SAM-ON \\ 
\hline
\multirow{2}{*}{\rotatebox[origin=c]{90}{ID}} & ImageNet & $66.89^{\pm 0.04}$ & $71.47^{\pm 0.12}$ & ${71.37}^{\pm 0.026}$ & $68.20^{\pm 0.02}$ & $71.90^{\pm 0.19}$ & $\mathbf{72.64}^{\pm 0.14}$ \\ 

& ImageNetV2 & $48.43 ^{\pm 0.48}$ & $53.61 ^{\pm 0.11}$ & $53.67 ^{\pm 0.29}$ & $50.20 ^{\pm 0.01}$ & $54.20 ^{\pm 0.27}$ & $\textbf{55.38 }^{\pm 0.09}$ \\ 
\hline
\multirow{4}{*}{\rotatebox[origin=c]{90}{OOD}} & ImageNetR & $25.04 ^{\pm 0.04}$ & $31.56 ^{\pm 0.48}$ & $\textbf{32.98} ^{\pm 0.10}$ & $25.61 ^{\pm 0.04}$ & $32.17 ^{\pm 0.41}$ & $\textbf{33.87 }^{\pm 0.47}$ \\

& ImageNetA & $4.72 ^{\pm 0.15}$ & $5.21 ^{\pm 0.05}$ & $5.19 ^{\pm 0.18}$ & $5.45 ^{\pm 0.19}$ & $5.01 ^{\pm 0.22}$ & $\textbf{5.77 }^{\pm 0.21}$ \\ 
  
& ImageNetSketch & $13.68 ^{\pm 0.24}$ & $18.50 ^{\pm 0.44}$ & $\textbf{19.35} ^{\pm 0.17}$ & $14.47 ^{\pm 0.02}$ & $18.22 ^{\pm 0.34}$ & $\textbf{20.48 }^{\pm 0.12}$ \\

& ObjectNet & $11.32 ^{\pm 0.39}$ & $13.75 ^{\pm 0.12}$ & $13.55 ^{\pm 0.25}$ & $12.06 ^{\pm 0.02}$ & $13.93 ^{\pm 0.40}$ & $\textbf{15.35 }^{\pm 0.13}$ \\ 

\hline
\multirow{4}{*}{\rotatebox[origin=c]{90}{adv. rob.}} & $\ell_2,\epsilon=0.25$ & $19.67 ^{\pm 0.47}$ & $37.53 ^{\pm 0.69}$ & $\textbf{41.16 }^{\pm 0.24}$ & $22.01 ^{\pm 0.78}$ & $38.52 ^{\pm 0.66}$ & $\textbf{43.12 }^{\pm 0.97}$ \\ 

&$\ell_2,\epsilon=0.50$ & $5.47 ^{\pm 0.18}$ & $17.71 ^{\pm 0.61}$ & $\textbf{22.72 }^{\pm 0.25}$ & $6.63 ^{\pm 0.46}$ & $19.03 ^{\pm 0.92}$ & $\textbf{24.27 }^{\pm 1.34}$ \\

&$\ell_\infty,\epsilon=0.25/255$ & $33.45 ^{\pm 0.80}$ & $48.08 ^{\pm 0.14}$ & $\textbf{49.34 }^{\pm 0.08}$ & $35.31 ^{\pm 0.08}$ & $49.57 ^{\pm 0.60}$ & $\textbf{51.37 }^{\pm 0.99}$ \\ 
 
&$\ell_\infty,\epsilon=0.5/255$ & $14.98 ^{\pm 0.18}$ & $29.68 ^{\pm 0.09}$ & $\textbf{32.46 }^{\pm 0.15}$ & $15.86 ^{\pm 0.13}$ & $31.68 ^{\pm 0.62}$ & $\textbf{34.23 }^{\pm 1.73}$ \\ 

\hline
\end{tabular}

    \label{tab:InVitsScratch}
\end{table}

\textbf{ImageNet.} 
For ImageNet, we train a ViT-S/32 from scratch with batchsize 128 on a single GPU for 300 epochs. In Table~\ref{tab:InVitsScratch} we evaluate \textit{SAM-all}, \textit{SAM-ON} and the vanilla variant for both AdamW and Lion \citep{chen2023lion} as base optimizers on both ID and OOD datasets. Since \citet{wei2023SAMRobustness} showed that SAM-trained models show non-trivial robustness to small adversarial perturbations, we also investigate the robustness of \textit{SAM-ON} (last rows of Table~\ref{tab:InVitsScratch}). Both \textit{SAM-all} and \textit{SAM-ON} improve strongly over the base optimizer in all setups. For AdamW, \textit{SAM-ON} either outperforms \textit{SAM-all} (e.g. w.r.t. adversarial robustness) or performs on par (e.g. for ID accuracy the numbers are within standard deviations). For Lion, \textit{SAM-ON} always outperforms \textit{SAM-all}, underlining that the diverse benefits of SAM can be achieved or even surpassed by only perturbing the normalization layers. We provide all experimental details in in Appendix \ref{app:ImageNet} and a more thorough evaluation and discussion of the adversarial robustness of \textit{SAM-all} and \textit{SAM-ON} in Appendix \ref{app:advRob}.

\subsection{Computational savings} \label{sec:runtime}
\begin{figure}
    \centering
    \includegraphics[width=0.49\textwidth]{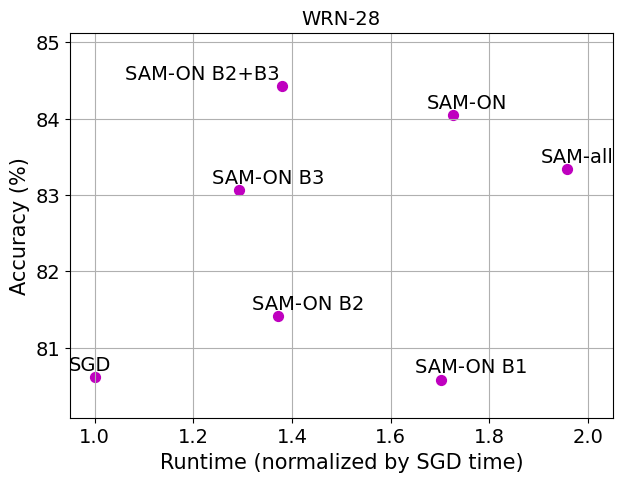}
    \includegraphics[width=0.49\textwidth]{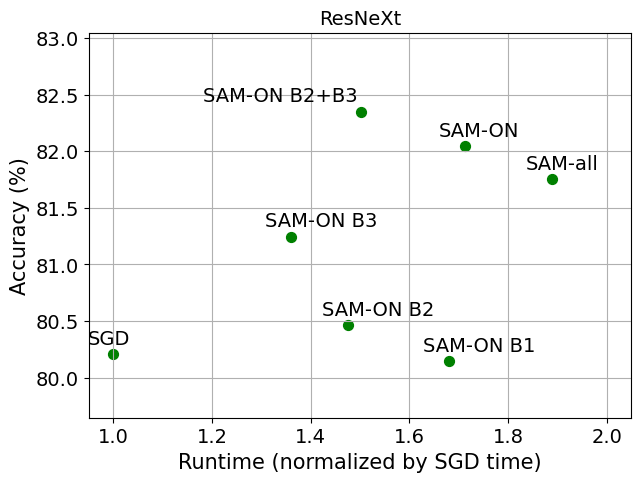}
\caption{\textbf{Computational gains of SAM-ON over SAM-all:} Test accuracy vs. normalized wall-clock runtime for SAM and different variations of SAM-ON for a WRN-28 (left) and a ResNeXt (right) on CIFAR-100. Only perturbing selected normalization parameters (e.g. those from block 3, or those from block 2 \textit{and} 3) can lead to further computational gains. 
Reported values are averaged over three random seeds.
}    \label{fig:runtime}
\vspace{-0.2cm}
\end{figure}
In
Figure~\ref{fig:runtime}
{we report the wall-clock time of training a WRN-28 (left) and a ResNeXt (right) with batchsize 128 on a single A100 with PyTorch. 
Since the normalization parameters at the earlier layers of the network require a gradient, potentially a full backpropagation pass has to be computed for the ascent-step of \textit{SAM-ON}, even though only a tiny fraction of all parameters is perturbed.
However, as discussed in \cite{Chen2021} for a related setting, the gradients of the intermediate (no-norm) layers do not need to be stored or used for updating. 
This leads to computational gains of \textit{SAM-ON} over \textit{SAM-all} as reported in 
Figure~\ref{fig:runtime}. 

In contrast, although future development of hardware for sparse operation may allow for acceleration, \citet{Mi2022} report that their sparse perturbation approach does not at present lead to reduced wall-clock time. Their approach also suffers from additional computational cost associated with selecting the mask, and hence is outshined by \textit{SAM-ON} both in computational cost and generalization performance (as discussed in Section \ref{sec:sparsity}). 

{We also show results for perturbing only the normalization layers of selected blocks (Block 1-3) of the network, and interestingly the main benefits of \textit{SAM-ON} seem to arise from the later normalization layers, allowing for further computational savings. When perturbing only the normalization layers from the last block (B3), the biggest computational gains can be achieved (reducing the additional cost of SAM by more than 50\%), without much loss in test accuracy. When perturbing the normalization layers from Block 2 and 3 (B2+B3), the test accuracy even slightly improves over \textit{SAM-ON} for both models, while the runtime is still significantly lower.
We have not investigated this variant of \textit{SAM-ON} thoroughly, i.e. in combination with other ASAM
perturbation models and more network architectures, but think that this is an interesting research direction for future work.}
}
\section{Towards Understanding {SAM-ON}} 

To gain a better understanding of \textit{SAM-ON}, we study different hypotheses for the method's success.
First, we investigate the role of sparsity by comparing \textit{SAM-ON} to different sparsified perturbation approaches (Section \ref{sec:sparsity}), concluding that sparsity alone is not enough to explain its success. 
Then, we highlight that \textit{SAM-ON} might in fact find \textit{sharper} minima, while generalizing better than \textit{SAM-all} (Section \ref{sec:sharpness}). We also show that \textit{SAM-ON} - similar to \textit{SAM-all} - reduces the feature-rank compared to vanilla optimizers (Section \ref{sec:sharpness}).
Further, we showcase that depending on the perturbation method, \textit{SAM-ON} can induce a significant shift in the distribution of the normalization parameters (Section \ref{sec:affine}) and relate this to the \textit{no-norm} results from Section \ref{sec:samon}. 
Unless stated otherwise, we use a WRN-28 with BatchNorm and the setting described in Section \ref{sec:samon} for the ablation studies. Further ablation studies are presented in Appendix~\ref{app:additionalExps}.

\begin{figure}[h]
    \centering
    \includegraphics[width=0.45\linewidth]{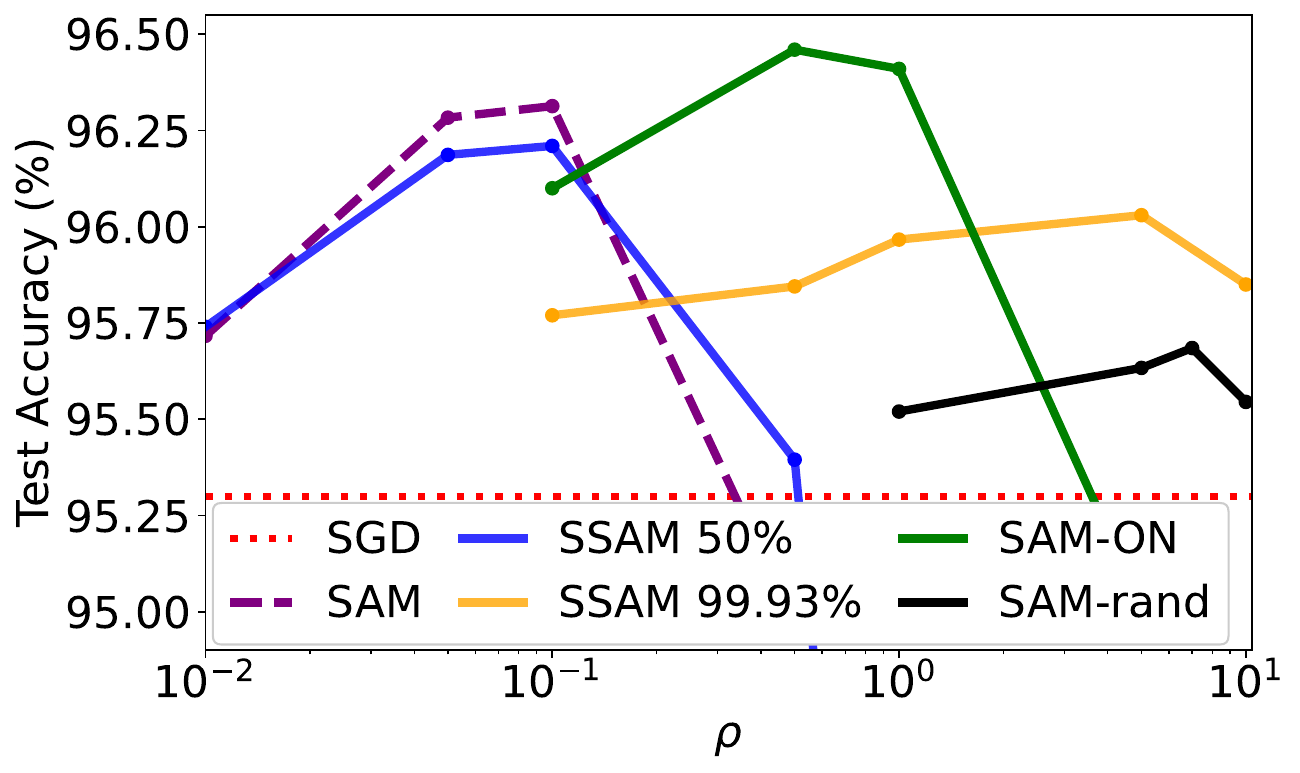}
    \quad\quad 
    \includegraphics[width=0.425\linewidth]{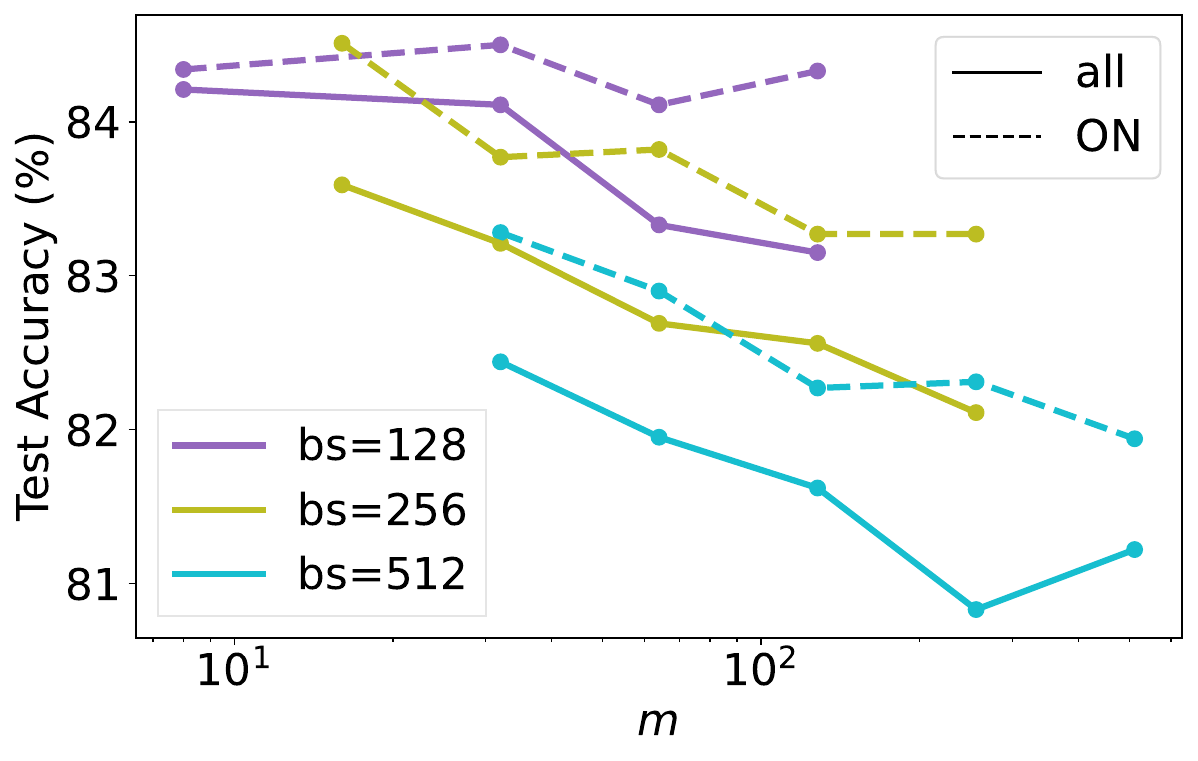}
    \caption{\textbf{Left: } \textit{SAM-ON} outperforms \textit{SSAM-F} \cite{Mi2022} (with different sparsity levels) and random mask \textit{SAM-rand} (same sparsity level 99.93\% as \textit{SAM-ON}) sparse perturbation approaches on CIFAR-10 for ResNet-18. \textbf{Right:} \textit{SAM-ON} improves over SAM for various batch-sizes (bs) and values of $m$, where smaller values of $m$ tend to improve performance. WRN-28, CIFAR-100.} %
    \label{fig:ssam}
\end{figure}

\subsection{The effect of sparsified perturbations}\label{sec:sparsity} 
\citet{Mi2022} propose a sparsified SAM (\textit{SSAM}) approach which only considers part of the parameters for the SAM perturbation step. They find that using 50\% perturbation sparsity \textit{SSAM} 
can outperform \textit{SAM-all}, 
and still perform competitively with up to 99\% sparsity in certain settings. %
This raises the question whether the enhanced performance of \textit{SAM-ON} over \textit{SAM-all} is simply due to perturbing fewer parameters. We therefore compare \textit{SAM-ON} to other sparse perturbation approaches.

In order to determine the parameters which should be perturbed, one solution \citet{Mi2022} propose is to compute a binary mask via an approximation of the Fisher matrix. We call this approach \textit{SSAM-F} to avoid confusion with {Fisher-SAM} introduced in Section \ref{sec:sam}. {\citet{Mi2022} also propose a dynamic mask sparse perturbation approach (\textit{SSAM-D}), but do not clearly favour either method. Since they consider \textit{SSAM-F} ``relatively more stable but a bit time-consuming, while \textit{[SSAM-D]} is more efficient'', we will for fair comparison focus on \textit{SSAM-F} here and provide results for \textit{SSAM-D} in Appendix \ref{app:sparsesam}. According to \citet{Mi2022} neither approach improves wall-clock time in practice, while we find that \textit{SAM-ON} does (see Section \ref{sec:runtime}).}

We provide a comparison between \textit{SAM-ON}, \textit{SSAM-F}, and a random sparsity mask of the same sparsity level as \textit{SAM-ON} for \textit{a)} ResNet-18 on CIFAR-10 (main setting considered in \cite{Mi2022}) in Figure \ref{fig:ssam} (left) and \textit{b)} WRN-28 on CIFAR-100 in Table \ref{tab:ssamWRN28}. We find that although \textit{SSAM-F} can indeed perform on par or even outperform \textit{SAM-all} at the medium to high sparsity levels recommended in \cite{Mi2022} %
it is less successful than \textit{SAM-ON}. Moreover, for the sparsity levels of \textit{SAM-ON}, which are above $99.9\%$ in both settings, a %
random mask %
performs poorly. These results suggest that sparsity is not the sole cause for \textit{SAM-ON}'s success.
\renewcommand{\arraystretch}{1.}

\begin{table}[h]
\renewcommand{\arraystretch}{1.}
    \centering
        \caption{\textbf{SAM-ON outperforms other sparse perturbation approaches:} Although \textit{SSAM-F} \cite{Mi2022} with different sparsity levels can outperform \textit{SAM-all} on CIFAR-100 with WRN-28, it is less effective than \textit{SAM-ON}, {especially when probed at very high sparsity levels.}}
\begin{tabular}{l|l|l|l|l l } 
 & SAM & SAM-ON & Random Mask & \multicolumn{2}{c}{SSAM-F} \\ 
 Sparsity & 0\% & 99.95\%  & 99.95\%& 50\% & 99.95\%  \\ 
\hline
Test Accuracy (\%) & 83.11$\pm$0.3 & \textbf{84.19}$\pm$0.2 & 80.97$\pm$0.2 & 83.94$\pm$0.1 & 83.14$\pm$0.1 
\end{tabular}

    \label{tab:ssamWRN28}
\end{table}

\subsection{Sharpness and feature-rank of {SAM-ON}}\label{sec:sharpness}
{
The improved generalization performance of SAM-trained models is often attributed to finding flatter minima  \cite{Hochreiter1997,Keskar2017} -- indeed this was the initial motivation behind SAM in \cite{Foret2021}. 
\citet{Andriushchenko2022} however cast doubt on this explanation, and argue that the benefits of SAM might stem from a favorable implicit bias induced by the method. 
A recent study furthermore found that sharpness often does not correlate well with a model's generalization performance \cite{Andriushchenko2023}. In this section we therefore compare the sharpness of \textit{SAM-all} to \textit{SAM-ON} models (following the setup in \cite{Andriushchenko2023}, more details provided in Appendix \ref{app:sharpness-eval}). 

In Table \ref{tab:sharpness-cifar100}, we report logit-normalized worst-case adaptive $\ell_\infty$ $m$-sharpness, the sharpness definition which achieved the best correlation with generalization error for CIFAR data in the large-scale study in \cite{Andriushchenko2023}. We investigate a WRN-28 on CIFAR-100 trained with SGD, SAM, and ASAM-$\ell_\infty$ both with their respective \textit{all} and \textit{ON} variants. 

Worst-case sharpness is measured with 20 steps of AutoPGD \cite{croce2020reliable}, a hyperparameter-free method designed for accurate estimation of adversarial robustness. Instead of the input space, which is optimized in adversarial robustness, the method is adapted to the weight space for sharpness computation. In addition to the 20-step AutoPGD-sharpness, we also report 1-step sharpness, i.e. the change in loss obtained when performing only a single gradient step, like in the perturbation step of SAM. It is to note that except for the logit-normalization, the sharpness definition reported in Table~\ref{tab:sharpness-cifar100} corresponds exactly to the perturbation model that ASAM elementwise $\ell_\infty$ uses, and hence the 1-step sharpness reported should be fairly close the the objective that ASAM elementwise $\ell_\infty$ actually minimizes during training.
For both sharpness definitions, we pick $\rho$ such that we get sharpness values similar to those that lead to good correlation with generalization in \cite{Andriushchenko2023}. This is, for 1-step sharpness we have to pick $\rho$ larger than for the 20-step sharpness to get to similar loss changes.

We observe that the 
\textit{SAM-ON} models obtain the best generalization performance, while having significantly higher sharpness than their \textit{SAM-all} counterparts, and sometimes even higher than that of the baseline SGD-trained model.
This supports the claims of \cite{Andriushchenko2022} that although \textit{SAM-all} might in fact find flatter minima, we should be careful in attributing this as the sole reason for its improved generalization performance. In Appendix \ref{app:sharpness-eval} we report more sharpness metrics and find that \textit{SAM-ON} is sharper than \textit{SAM-all} for most metrics, although
there exist exceptions.

A recent study by \citet{andriushchenko2023lowranksam} further showed that \textit{SAM-all} leads to features of lower rank compared to vanilla optimizers. Following their setup we measure the feature rank for a WRN-28 and report our findings in Table~\ref{tab:sharpness-cifar100} (final row). We find that \textit{SAM-ON} also leads to features of lower rank compared to SGD, but observe no significant change compared to \textit{SAM-all}. %
}

\begin{table}[h]
\renewcommand{\arraystretch}{1.}
    \centering
        \caption{\textbf{\textit{SAM-ON} models are sharper, yet generalize better.} Shown is logit-normalized $\ell_\infty$ $m$-sharpness from \cite{Andriushchenko2023}, averaged over three models per method for a WRN-28 on CIFAR-100, and the feature rank like investigated in \cite{andriushchenko2023lowranksam} (last row).}
\setlength{\tabcolsep}{1pt}
\begin{tabular}{l l @{\hskip 0.2in} l @{\hskip 0.25in} ll @{\hskip 0.25in} ll}

& & SGD & \multicolumn{2}{c}{SAM} {\hskip 0.5in} & \multicolumn{2}{c}{ASAM-el.-$l_\infty$} \\
& & & all & ON & all & ON \\
 
\hline 
\multicolumn{2}{l}{Test Accuracy (\%)}& $80.71^{\pm 0.2}$ & $83.11^{\pm0.3}$ & $\mathbf{84.19}^{\pm0.2}$ & $83.25^{\pm0.2}$ & $\mathbf{84.14}^{\pm0.2}$ \\ 
\hline

\hline
\multirow{6}{*}{\rotatebox[origin=c]{90}{$\ell_\infty$-sharpness}}& 20 steps, $\rho=0.003$ & $0.071 ^{\pm 0.000}$ & $\mathbf{0.048} ^{\pm 0.001}$ & $0.090 ^{\pm 0.005}$ & $\mathbf{0.048} ^{\pm 0.001}$ & $0.078 ^{\pm 0.004}$ \\ 
 
&20 steps, $\rho=0.005$ & ${0.201} ^{\pm 0.001}$ & $\mathbf{0.139} ^{\pm 0.004}$ & $0.296 ^{\pm 0.018}$ & $\mathbf{0.124} ^{\pm 0.002}$ & $0.283 ^{\pm 0.011}$ \\ 
 
&20 steps, $\rho=0.007$ & $0.433 ^{\pm 0.002}$ & $\mathbf{0.309} ^{\pm 0.011}$ & $0.585 ^{\pm 0.018}$ & $\mathbf{0.255} ^{\pm 0.005}$ & $0.580 ^{\pm 0.020}$ \\ 

\cline{2-7}

&1 step, $\rho=0.007$ & $0.117 ^{\pm 0.002}$ & $\mathbf{0.098} ^{\pm 0.001}$ & $0.170 ^{\pm 0.007}$ & $\mathbf{0.095} ^{\pm 0.002}$ & $0.142 ^{\pm 0.011}$ \\ 
 
&1 step, $\rho=0.01$ & $0.204 ^{\pm 0.005}$ & $\mathbf{0.183} ^{\pm 0.002}$ & $0.315 ^{\pm 0.010}$ & $\mathbf{0.170} ^{\pm 0.003}$ & $0.271 ^{\pm 0.019}$ \\ 
 
&1 step, $\rho=0.03$ & $0.809 ^{\pm 0.003}$ & $\mathbf{0.769} ^{\pm 0.017}$ & $0.843 ^{\pm 0.007}$ & $\mathbf{0.724} ^{\pm 0.005}$ & ${0.834} ^{\pm 0.012}$ \\ 
\hline
\multicolumn{2}{l}{Feature Rank}&   $4004 ^{\pm 12}$ & ${3798 }^{\pm 17}$ & $\textbf{3728 }^{\pm 11}$ & $\textbf{3613 }^{\pm 47}$ & ${3725 }^{\pm 11}$ \\ 

\end{tabular}

    \label{tab:sharpness-cifar100}
\end{table}

\subsection{SAM-ON can change the affine parameter values}
\label{sec:affine}
In Figure \ref{fig:weight-distr-1row} we show the distribution of the scaling parameter $\gamma$ and the shift parameter $\beta$ (as defined in Eq. \ref{eq:norm}) for a WRN-28 trained with \textit{SAM-ON} and \textit{SAM-all}. While there are only minor changes in the distribution of $\beta$, there is a clear shift in the distribution of $\gamma$ towards larger values for \textit{SAM-ON}. In Figure \ref{fig:weight-distr-big} in Appendix \ref{app:weight-distr} we study more SAM-variants and rediscover a pattern from Section \ref{sec:SAMBN}: The distribution shift for $\gamma$ only seems to appear for those variants, where the optimal $\rho$ of \textit{SAM-ON} shifts towards larger values compared to \textit{SAM-all} (i.e. \textit{not} for ASAM elementwise-$\ell_2$). We thus hypothesize that the perturbation model of ASAM elementwise-$\ell_2$ implicitly focuses mostly on perturbing the normalization layers already, which is why excluding them (\textit{no-norm}) leads to a drastic performance decrease, whereas \textit{SAM-ON} minimally changes the $\gamma$-distribution. In contrast, the other SAM-variants (for which the optimal $\rho$-value changes) may not perturb the normalization layers \textit{enough} in \textit{SAM-all} for them to be effective, which is why \textit{no-norm} has little effect, while \textit{SAM-ON} leads to a distinctive shift of the $\gamma$-distribution. 
\begin{figure}[h]
    \centering
    \includegraphics[width=1.\linewidth]{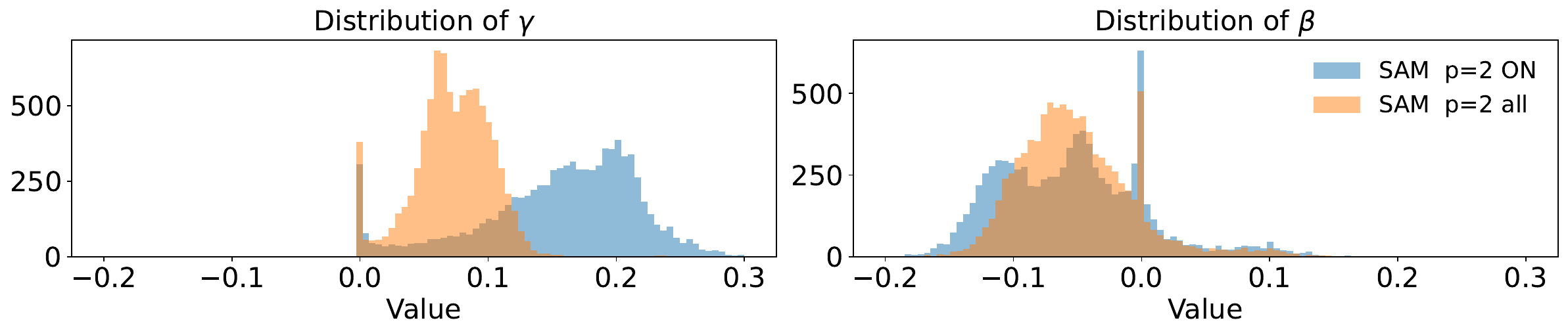}
    \caption{\textit{SAM-ON} changes the weight distribution of the normalization layer weights of a WRN-28 towards larger values. More SAM-variants are shown in Figure \ref{fig:weight-distr-big} in the appendix.}
    \label{fig:weight-distr-1row}
\end{figure}

\section{Discussion and Conclusion}
In recent years the method of sharpness-aware minimization (SAM) \cite{Foret2021} has risen to prominence due to its demonstrated effectiveness and the community's long-standing interest in flat minima. In this work we show that only applying SAM to the normalization layers (\textit{SAM-ON}) -- typically less than $0.1\%$ of the total parameters -- can significantly enhance performance compared to regular SAM (\textit{SAM-all}). We show results on CIFAR and ImageNet for ResNet and Vision Transformers with BatchNorm and Layernorm, respectively. Although the use of sparsified perturbations was recently shown to benefit generalization \cite{Mi2022}, 
we show that the success of \textit{SAM-ON} cannot be attributed to sparsity alone: targeting the normalization layers clearly improves over other masked sparsity approaches. 

We find that while \textit{SAM-ON} outperforms \textit{SAM-all} in almost all settings, the optimal SAM variant to use varies. 
We do not see a consistent benefit of using reparameterization-invariant perturbation models compared to variants with fewer invariances. 
In particular, we find that layerwise $\ell_2$
in combination with \textit{SAM-ON} reaches the highest accuracy for many settings.

{While \textit{SAM-ON} improves generalization, it loses some of SAM's sharpness-reducing qualities. Although perhaps surprising given SAM's original motivation, this finding relates naturally to the literature. \cite{Andriushchenko2023} find that sharpness does not always correlate well with generalization performance.
Further, \cite{Andriushchenko2022} question if sharpness is the sole driving factor behind SAM's success in enhancing generalization. We lend support to this question, showing that SAM-like methods can generalize better, without significant sharpness reduction.}

In summary, we 
demonstrate benefits of SAM beyond reducing sharpness and highlight the special role played by the normalization layers. More investigation into %
the interplay of SAM and other aspects of training are needed to fully understand where the methods gains come from.

\textbf{Limitations.} Similar to the main inspirations for this work \cite{Foret2021,Frankle2021,Mi2022} we focus on vision data. 
We provide a simple experiment in the language domain in Appendix \ref{app:translation} indicating that the efficacy of \textit{SAM-ON} might be preserved for language tasks, but more extensive benchmarking, as done by \cite{Bahri2022} is required. Further, more work is required to try to leverage the perturbation sparsity for reduced computational cost beyond the gains obtained in this work, and to fully explore the benefits of perturbing subsets of the normalization layers building on the findings in Section \ref{sec:runtime}. 

\subsection*{Acknowledgements}
\label{sec:acknowledgements}
We acknowledge support from the Deutsche Forschungsgemeinschaft (DFG, German Research Foundation) under Germany’s Excellence Strategy (EXC number 2064/1, Project number 390727645), as well as from the Carl Zeiss Foundation in the project "Certification and Foundations of Safe Machine Learning Systems in Healthcare". We also thank the European Laboratory for Learning and Intelligent Systems (ELLIS) for supporting Maximilian Müller. We are grateful for support from the Canada CIFAR AI Chairs Program and US National Science Foundation award tel:1910864. In addition, we acknowledge material support from NVIDIA and Intel in the form of computational resources and are grateful for technical support from the Mila IDT team in maintaining the Mila Compute Cluster.

\bibliography{references}
\newpage

\appendix

{\Large \textbf{Appendix}}
\ \\ 

This appendix is structured as follows:
\begin{itemize}
    \item In Appendix \ref{appx:training} we provide more training details. In particular, we report the hyperparameters used for the CIFAR experiments in \ref{appx:cifarres} and for the ImageNet experiments in \ref{app:ImageNet}. In \ref{app:sam-variants} we provide more details and a formal definition of the SAM-variants used throughout this paper.
    \item In Appendix \ref{app:additionalExps} we show additional experimental results for: CIFAR in \ref{app:cifar-exp}, ImageNet in \ref{app:fine-tune-ImageNet-vit}, and a machine translation task in \ref{app:translation}. In \ref{app:sparsesam} we provide additional ablation studies for sparse perturbation \textit{SSAM} approaches and in \ref{app:advRob} we extend the discussion on adversarial robustness. 
    To gain a better understanding of SAM-ON, we further investigate:
    the weight distribution shift induced by \textit{SAM-ON} (\ref{app:weight-distr}), the effect of SAM when fixing the normalization parameters during training (\ref{app:fix-norm}), SAM's performance when only training the normalization layers (\ref{app:trainOnlyBN}), and ablations on weight decay and dropout (\ref{app:wd+do}). Finally, we provide an extended discussion on the sharpness evaluation and more ablations in \ref{app:sharpness-eval}.
    \item In Appendix \ref{app:conv} we provide a convergence analysis for \textit{SAM-ON}.
\end{itemize}
\section{Training Details}\label{appx:training}

\subsection{CIFAR training details}\label{appx:cifarres}
For our CIFAR experiments, we consider a range of SAM-variants which differ either in the norm ($p\in\{2,\infty\}$) or in the definition of the normalization operator. We use SGD, the original SAM with no normalization and $p=2$, Fisher-SAM and the following ASAM-variants: elementwise-$\ell_\infty$, layerwise-$\ell_2$, and elementwise-$\ell_{2}$. For the ViT-experiments, we use AdamW instead of SGD. For each of the ASAM-variants, we normalize both bias and weight parameters and set $\eta=0$. Additionally, we employ the original ASAM-algorithm, where the bias parameters are not normalized and $\eta=0.01$.
We train all models on a single GPU for 200 epochs, and m-sharpness is not employed (unless indicated otherwise). For ResNets, we follow \cite{Kwon2021} and adopt a learning rate of 0.1, momentum of 0.9, weight decay of $0.0005$ and use label smoothing with a factor of 0.1. We use both basic augmentations (random cropping and flipping) and strong augmentations (basic+AutoAugment). For ViTs we use AdamW with learning rate $0.0001$, batchsize $64$ and only strong augmentations, the other settings remain unchanged. The ResNet results were computed on 2080ti-GPUs and the ViT results on A100s.
The values of $\rho$ we considered for each method can be found in Table \ref{tab:rho_tuned_rhovalues}. The ResNet-networks we considered for the CIFAR-experiments in the main paper are ResNet56 (RN56) \cite{He2016}, ResNeXt-29-32x4d (RNxT) \cite{Xie2016}, and WideResNet-28-10 (WRN) \cite{Zagoruyko2016WRN}. We adopted the ViTs to CIFAR by setting the image-size to 32 and patch-size to 4.

\begin{table}[htbp]
\renewcommand{\arraystretch}{1.2}
    \centering
    \caption{Search-space for $\rho$. The values used for the the experiments in Tables \ref{tab:resnet-cifar100},\ref{tab:vits-cifar100} and \ref{tab:resnet-cifar10} are marked in bold.}
    \resizebox{1.\textwidth}{!}{
    \begin{tabular}{l c|c|c|c}
    & & CIFAR-10 RN & CIFAR-100 RN & CIFAR-10/100 ViT\\ \hline
    SAM & all & 0.05, \textbf{0.1}, 0.25 & 0.05, \textbf{0.1}, 0.5, 1. & 0.025, 0.05, \textbf{0.1}, 0.25, 0.5\\
    SAM & ON & 0.1, \textbf{0.5}, 1 & 0.1, 0.5, \textbf{1.}, 5. & 1., 2.5, 5., \textbf{10.}, 25 \\
    el. $l_2$ &all & 0.5, 1, \textbf{2}, 3, 5&   0.5, \textbf{1}, {2.5}, 5., 10. & 0.5, 1., \textbf{2.5}, 5, 10\\
    el. $l_2$ & ON &0.5, 1, 2, \textbf{3}, 5&  0.5, 1., \textbf{2.5}, 5., 10. & 1., 2.5, 5.,\textbf{10.}, 25\\
    el. $l_2$, orig. & all &0.1, \textbf{0.5}, 1, 5, 10&  0.5, \textbf{1}, 2.5, 5 & 0.5, 1., \textbf{2.5}, 5, 10\\
    el. $l_2$, orig. & ON &0.1, \textbf{0.5}, 1, 5, 10&  0.5, \textbf{1.}, 2.5, 5 & 1., 2.5, 5.,\textbf{10.}, 25\\
    el. $l_\infty$ & all &0.001, \textbf{0.005}, {0.01}, 0.05&  0.001, {0.005}, \textbf{0.01}, 0.05 & 0.0005, 0.001, \textbf{0.0025}, 0.005, 0.01\\
    el. $l_\infty$ & ON &0.01, \textbf{0.025}, {0.05}, 0.1&  0.01, \textbf{0.05}, 0.1, 0.5 & 0.025, 0.05, \textbf{0.1}, 0.25, 0.5\\
    layer $l_2$ & all &0.005, 0.01,  \textbf{0.025}, 0.05, 0.1&  0.001, \textbf{0.01}, 0.05, 0.1 &  0.001, {0.0025}, \textbf{0.005}, 0.01, 0.025\\
    layer $l_2$& ON & 0.05, 0.1, \textbf{0.25}, 0.5, 1 &  0.1, \textbf{0.2}, 0.5, 1. & 0.05, 0.1, 0.25, \textbf{0.5}, 1. \\ 
    Fisher & all & 0.05, \textbf{0.1}, 0.5, 1,5 & 0.05, \textbf{0.1}, 0.5, 1 & 0.05,5 \textbf{0.1}, 0.5, 1,5 \\
    Fisher& ON & {0.1}, \textbf{0.5},1 ,5 , 10  & {0.1}, {0.5},\textbf{1} ,5 , 10 & {0.1}, {0.5},{1} ,5 , \textbf{10}
    \end{tabular}
    }
    \label{tab:rho_tuned_rhovalues}
\end{table}
\renewcommand{\arraystretch}{1.}

\subsection{ImageNet training details}\label{app:ImageNet}
Table \ref{tab:imagenet-hyperparams} shows the hyperparameters for all variants used for ImageNet training. For the ResNet-50 with SGD, SAM and elementwise-$\ell_2$ we used the hyperparameters from \cite{Foret2021} and \cite{Kwon2021}. For the layerwise $\ell_2$ and elementwise-$\ell_\infty$ we tried two $\rho$-values per configuration and report the results of the better one (named $\rho$ (reported) in the table). $\rho$ (discarded) refers to the $\rho$ value we probed, but found to perform worse than the other one. For the ViT-S (additional fine-tuning experiments in Appendix \ref{app:fine-tune-ImageNet-vit}),  we tried at least three values of $\rho$ per SAM-configuration and reported the best one.
    
\begin{table}[htb]
\tabcolsep=0.0cm
    \centering
    \small
       \caption{Hyperparameters for training on ImageNet. Top: ResNet-50 from scratch, center: ViT-S from scratch, bottom: finetuning the ViT-S.}
       \centering
       \tabcolsep=0.11cm
              \begin{flushleft}

 \begin{tabular}{|c c c c c c  c c c c|}
 \hline
         param & {SGD} & {SAM}  & \multicolumn{2}{c}{elem. $\ell_{2}$}  &ResNet-50 & \multicolumn{2}{c}{elem. $\ell_{\infty}$}  & \multicolumn{2}{c|}{layer $\ell_{2}$} \\ 
          & all & all  & all & onlyNorm & & all & onlyNorm& all & onlyNorm\\ \hline
         train epochs &  & & & & 90 & & & & \\
         warm-up epochs & &  & & & 3 & & & & \\
         cool-down epochs &  & & & & 10 & & & & \\
         batch-size & & & & & 512 & & & & \\
         augmentation & & &  & & inception-style& & & & \\
         lr & &  &&  & 0.2 & & & & \\
         lr decay &  & & & & Cosine & & & & \\
         weight decay  & & & & & 0.0001 & & & & \\
         $\rho$ (reported) & & 0.05 & 1 & 1 & & 0.001 & 0.005 & 0.005 & 0.05 \\
         $\rho$ (discarded) & & & &  & & 0.01 & 0.05 & 0.05 & 0.5 \\
         Input Resolution & & & & & $224\times224$ & & & & \\
         m & & & & & $64$ & & & & \\
         GPU Type & & & & & $8\times$2080-ti & & & & \\
         \hline
    \end{tabular}
\end{flushleft}

    \centering
    \small
       \begin{flushleft}
 \begin{tabular}{|c c c c c  c c c|}
 \hline
         param & {AdamW} & \multicolumn{2}{c}{AdamW+SAM} & ViT-S scratch & Lion  & \multicolumn{2}{c|}{Lion+SAM} \\ 
          & all & all & onlyNorm&  & all & all & onlyNorm\\ \hline
         train epochs & & & & 300 & & & \\
         warm-up epochs & &&   & 10 & & & \\
         cool-down epochs & & & & 0 & & & \\
         batch-size & & &  & 128 & & & \\
         augmentation & & &&   inception-style& & & \\
         lr & & &  & 0.001 & & & \\
         lr decay & & & & Cosine & & & \\
         weight decay & & & & 0.1 & & & \\
         $\rho$ (reported) & -- & 1 & 15 &   &  --  &  1 & 10\\
         $\rho$ (discarded) & -- & 0.05,0.1,0.5,2 & 10,20 &  &  --  &  0.5,2 & 5,20\\
         Input Resolution & & & &  $224\times224$ & & & \\
         m & & & & $128$ & & & \\
         GPU Type & & & & $1\times$A100 & & & \\
         \hline
    \end{tabular}
\end{flushleft}

\vspace{0.3cm}
    \centering
    \small
       \tabcolsep=0.11cm
              \begin{flushleft}

 \begin{tabular}{|c c c c c c c  c c c c|}
 \hline
         param & {SGD} & \multicolumn{2}{c}{SAM}  & \multicolumn{2}{c}{elem. $\ell_{2}$}  & ViT-S FT & \multicolumn{2}{c}{elem. $\ell_{\infty}$}  & \multicolumn{2}{c|}{layer $\ell_{2}$} \\ 
          & all & all & onlyNorm & all & onlyNorm &  & all & onlyNorm& all & onlyNorm\\ \hline
         train epochs & & & & & & 9 & & & & \\
         warm-up epochs & &&  & & & 1 & & & & \\
         cool-down epochs & & & & & & 0 & & & & \\
         batch-size & & & & & & 896 & & & & \\
         augmentation & & &&  & & inception-style& & & & \\
         lr & & & &&  & 0.017 & & & & \\
         lr decay & & & & & & Cosine & & & & \\
         weight decay & & & & & & 0.0001 & & & & \\
         $\rho$ (reported) & -- & 0.01 & 0.1 & 0.1 & 1 & & $10^{-4}$ &$10^{-2}$ & $10^{-3}$ &$10^{-3}$ \\
         $\rho$ (discarded) & & 0.1 & 0.01 & 0.01 & 0.1 & & $10^{-3}$ & $10^{-3}$ & $10^{-2}$ & $10^{-2}$ \\
         $\rho$ (discarded) & & 0.001 & 1. & 1. & 10 & & $10^{-5}$ & $10^{-1}$ & $10^{-4}$ & $10^{-1}$ \\
         Input Resolution & & & & & & $224\times224$ & & & & \\
         m & & & &&& $128$ & & & & \\
         GPU Type & & && & & $7\times$A100 & & & & \\
         \hline
    \end{tabular}
\end{flushleft}

\vspace{0.3cm}
    \label{tab:imagenet-hyperparams}

\end{table}
\subsection{SAM variants}\label{app:sam-variants}
Here, we provide a more comprehensive overview of the SAM-variants used throughout the experiments. To this end, we first recall the definition of the (A)SAM-perturbation (Eq.~\eqref{SAMsolution} in the main paper):
\begin{align*}
    \mathbf{\epsilon}_{2}&=\rho\frac{{T}_{w}^{2}\nabla{L(\mathbf{w})}}{||{T}_{w}\nabla{L(\mathbf{w})}||_{2}} \text{ for $p=2$},\quad\quad\quad
    \mathbf{\epsilon}_{\infty}=\rho T_{w}\text{sign}\big(\nabla{L(\mathbf{w})}\big) \text{ for $p=\infty$.}
\end{align*}
with the normalization operator $T_w^i$, which is diagonal for all variants. We note that \textit{SAM-ON} can be formally defined as using the conventional (A)SAM-algorithm but setting all entries $T_{w}^i=0$ if $w_i$ is not a normalization parameter. This leads to a change of the perturbation $\epsilon$ according to Eq.~\eqref{SAMsolution}. Importantly, the magnitude of $\epsilon$ is still $\rho$, since both the nominator and the denominator of Eq.~\eqref{SAMsolution} change. We provide an overview over all (A)SAM-variants and their respective perturbation models in Table \ref{tab:normalization-definitions}.

\begin{table}[htbp]
    \centering
    \caption{The definition of $T_w^i$ for the considered SAM-variants.}
    \resizebox{1.\textwidth}{!}{
    \begin{tabular}{l c|l c c }
    variant &  & \multicolumn{1}{|c}{$T_w^i$} & $p$ & $\eta$\\ \hline
    \multirow{2}{*}{SAM} & all & 1 & 2 & 0\\ 
     & ON & $\begin{cases}
1 & \text{ if $w_i$ is a normalization parameter} \\
0 & \text{ else}
\end{cases}$ & 2 & 0\\
\hline
    \multirow{2}{*}{el. $\ell_2$} &all & $|w_i|$ & 2& 0\\
    & ON & $\begin{cases}
|w_i| & \text{ if $w_i$ is a normalization parameter}\\
0 & \text{ else}
\end{cases}$& 2& 0\\
\hline
    \multirow{2}{*}{el. $\ell_2$, orig.} & all & $\begin{cases}
|w_i| + \eta & \text{ if $w_i$ is a weight parameter}\\
1 + \eta & \text{ if $w_i$ is a bias parameter}
\end{cases}$& 2& 0.01\\
   & ON & $\begin{cases}
|w_i|+ \eta & \text{ if $w_i$ is a normalization weight}\\
1 + \eta & \text{ if $w_i$ is a normalization bias} \\
0 & \text{else}
\end{cases}$& 2& 0.01\\
\hline
    \multirow{2}{*}{el. $\ell_\infty$} & all & $|w_i|$& $\infty$& 0\\
    & ON & $\begin{cases}
|w_i| & \text{ if $w_i$ is a normalization parameter} \\
0 & \text{ else}
\end{cases}$ & $\infty$& 0\\
\hline
    \multirow{2}{*}{layer $\ell_2$} & all & $||\mathbf{W}_\text{layer[i]}||_2$ & 2& 0\\
    & ON &  $\begin{cases}
||\mathbf{W}_\text{layer[i]}||_2 & \text{ if $w_i$ is a normalization parameter}\\
0 & \text{ else}
\end{cases}$& 2& 0\\ 
\hline
    \multirow{2}{*}{Fisher} & all & $\left({1+\eta \left(\partial_{w_i}L_{Batch}{(\mathbf{w})}\right)^{2}}\right)^{-0.5}$ & 2& 1\\
    & ON & $\begin{cases}
\left({1+\eta \left(\partial_{w_i}L_{Batch}{(\mathbf{w})}\right)^{2}} \right)^{-0.5}& \text{ if $w_i$ is a normalization parameter}\\
0 & \text{ else}
\end{cases} $& 2& 1
    \end{tabular}
    }
    \label{tab:normalization-definitions}
\end{table}
\FloatBarrier
\section{Further Experimental Results}\label{app:additionalExps}
\subsection{SAM-ON on CIFAR}\label{app:cifar-exp}
We omitted the results for ResNet-like models on CIFAR-10 in the main paper. Those are thus reported in Table \ref{tab:resnet-cifar10}. Due to the already very high accuracies, the differences between \textit{SAM-ON} and \textit{SAM-all} are smaller, yet on average \textit{SAM-ON} is still clearly the better method. We further plot all considered SAM-variants for different values of $\rho$ in Figure \ref{fig:wrn-cifar100-allvariants} for a WRN-28 and in Figure \ref{fig:vits-cifar100-allvariants} for a ViT-S on CIFAR-100. We show results for various VGG-models \cite{simonyan2015VGG} and DenseNet-100 \cite{huang2017Densenet} for CIFAR-10/100 in Table \ref{tab:more-RN-models} and observe that \textit{SAM-ON} consistently improves over \textit{SAM-all}.

\begin{table}[h]
    \centering
        \caption{\textbf{\textit{SAM-ON} improves over \textit{SAM-all} {for BatchNorm and ResNets on CIFAR-10}}: {Test accuracy for} ResNet-like models on CIFAR-10. Bold values mark the better performance between \textit{SAM-ON} and \textit{SAM-all} within a SAM-variant, and \underline{underline} highlights the overall best method per model and augmentation}
\resizebox{1.\textwidth}{!}{
\setlength{\tabcolsep}{6pt}
\begin{tabular}{|l|l| l l | l l | l l |}

\hline
 & SAM variant & \multicolumn{2}{|c|}{RN-56} & \multicolumn{2}{|c|}{RNxT} & \multicolumn{2}{|c|}{WRN-28} \\ 

 &  & all & onlyNorm & all & onlyNorm & all & onlyNorm \\ 
 
\hline
\multirow{7}{*}{\rotatebox[origin=c]{90}{basic aug. \hspace*{0.4cm}}} & SGD & $94.28^{\pm 0.2}$ &  & $95.37^{\pm 0.1}$ &  & $96.20^{\pm 0.1}$ &  \\ 
 
 & SAM & $94.94^{\pm 0.1}$ & $\textbf{95.18}^{\pm 0.1}$ & $96.35^{\pm 0.2}$ & $\textbf{96.48}^{\pm 0.1}$ & $97.08^{\pm 0.1}$ & $\textbf{97.10}^{\pm 0.0}$ \\ 
 
 & elem. $\ell_2$ & $\textbf{94.96}^{\pm 0.1}$ & $94.94^{\pm 0.2}$ & $96.41^{\pm 0.1}$ & ${\textbf{96.53}}^{\pm 0.1}$ & $96.98^{\pm 0.2}$ & $\textbf{97.06}^{\pm 0.0}$ \\ 
 
 & elem. $\ell_2,$ orig. & $95.14^{\pm 0.1}$ & $\underline{\textbf{95.21}}^{\pm 0.1}$ & $96.40^{\pm 0.1}$ & $\textbf{96.41}^{\pm 0.1}$ & $\textbf{97.10}^{\pm 0.1}$ & $97.07^{\pm 0.1}$ \\ 
 
 & elem. $\ell_{\infty}$ & $94.93^{\pm 0.1}$ & $\textbf{94.96}^{\pm 0.0}$ & $96.06^{\pm 0.2}$ & $\textbf{96.22}^{\pm 0.1}$ & $96.95^{\pm 0.2}$ & $\textbf{97.00}^{\pm 0.1}$ \\

 & Fisher & $95.01^{\pm 0.1}$ & $\textbf{95.03}^{\pm 0.1}$ & $96.31^{\pm 0.0}$ & $\underline{\textbf{96.55}}^{\pm 0.0}$ & $96.95^{\pm 0.0}$ & $\underline{\textbf{97.13}}^{\pm 0.1}$ \\ 
 
 & layer. $\ell_{2}$ & $94.95^{\pm 0.2}$ & $\textbf{95.07}^{\pm 0.1}$ & $96.07^{\pm 0.3}$ & $\textbf{96.46}^{\pm 0.1}$ & $\textbf{97.02}^{\pm 0.0}$ & $96.96^{\pm 0.1}$ \\ 
 
\hline
\multirow{7}{*}{\rotatebox[origin=c]{90}{basic aug. + AA}} & SGD & $94.70^{\pm 0.1}$ &  & $96.19^{\pm 0.2}$ &  & $97.01^{\pm 0.0}$ &  \\ 
 
 & SAM & $95.25^{\pm 0.1}$ & $\textbf{95.40}^{\pm 0.1}$ & $96.98^{\pm 0.1}$ & $\textbf{97.22}^{\pm 0.3}$ & $97.57^{\pm 0.1}$ & $\textbf{97.58}^{\pm 0.0}$ \\ 
 
 & elem. $\ell_2$ & $\textbf{95.12}^{\pm 0.0}$ & $94.82^{\pm 0.2}$ & $97.01^{\pm 0.0}$ & $\textbf{97.21}^{\pm 0.1}$ & $97.61^{\pm 0.0}$ & $\underline{\textbf{97.69}}^{\pm 0.0}$ \\ 
 
 & elem. $\ell_2, $ orig. & $95.39^{\pm 0.1}$ & $\underline{\textbf{95.60}}^{\pm 0.1}$ & $97.24^{\pm 0.0}$ & $\underline{\textbf{97.33}}^{\pm 0.1}$ & $\textbf{97.60}^{\pm 0.0}$ & $97.56^{\pm 0.0}$ \\ 
 
 & elem. $\ell_{\infty}$ & ${95.12}^{\pm 0.1}$ & $\textbf{95.48}^{\pm 0.3}$ & $96.70^{\pm 0.2}$ & $\textbf{96.91}^{\pm 0.2}$ & $97.52^{\pm 0.1}$ & $\textbf{97.62}^{\pm 0.1}$ \\

& Fisher & $95.19^{\pm 0.0}$ & $\textbf{95.38}^{\pm 0.1}$ & $96.77^{\pm 0.0}$ & $\textbf{97.24}^{\pm 0.1}$ & $97.53^{\pm 0.0}$ & $\textbf{97.65}^{\pm 0.1}$ \\

 & layer. $\ell_{2}$ & $\textbf{95.43}^{\pm 0.3}$ & $95.28^{\pm 0.1}$ & $96.80^{\pm 0.1}$ & $\textbf{96.88}^{\pm 0.1}$ & $\textbf{97.60}^{\pm 0.0}$ & $97.48^{\pm 0.1}$ \\ 
 
\hline
\end{tabular}

}
\label{tab:resnet-cifar10}
\end{table}

\begin{table}[h]
    \centering
        \caption{\textbf{\textit{SAM-ON} improves over \textit{SAM-all} {for BatchNorm and more ResNet models}}: Bold values mark the better performance between \textit{SAM-ON} and \textit{SAM-all} within a SAM-variant, and \underline{underline} highlights the overall best method per model and augmentation}
\resizebox{1.\textwidth}{!}{
\setlength{\tabcolsep}{6pt}
\begin{tabular}{|l|l| l l | l l | l l | l l |}

\hline
 &  & \multicolumn{2}{|c|}{VGG-13} & \multicolumn{2}{|c|}{VGG-16} & \multicolumn{2}{|c|}{VGG-19} & \multicolumn{2}{|c|}{DenseNet-100} \\ 
 
 & SAM variant & all & onlyNorm & all & onlyNorm & all & onlyNorm & all & onlyNorm \\ 
 
\hline
\multirow{7}{*}{\rotatebox[origin=c]{90}{CIFAR-100 \hspace*{0.2cm}}} & SGD & $75.44^{\pm 0.2}$ &  & $74.43^{\pm 0.4}$ &  & $73.40^{\pm 0.2}$ &  & $77.00^{\pm 0.2}$ &  \\ 
 
 & SAM & $76.74^{\pm 0.2}$ & $\textbf{77.57}^{\pm 0.1}$ & $75.81^{\pm 0.2}$ & $\textbf{76.86}^{\pm 0.1}$ & $74.08^{\pm 0.6}$ & $\underline{\textbf{75.60}}^{\pm 0.1}$ & $79.42^{\pm 0.6}$ & $\textbf{79.90}^{\pm 0.3}$ \\ 
 
 & elem. $\ell_2$ & $76.65^{\pm 0.1}$ & $\textbf{77.49}^{\pm 0.1}$ & $75.95^{\pm 0.2}$ & $\textbf{76.45}^{\pm 0.2}$ & $74.72^{\pm 0.2}$ & $\textbf{75.12}^{\pm 0.1}$ & $78.90^{\pm 0.2}$ & $\textbf{79.83}^{\pm 0.3}$ \\ 
 
 & elem. $\ell_2, \eta=0.01$ & $77.27^{\pm 0.2}$ & $\textbf{77.37}^{\pm 0.2}$ & $76.65^{\pm 0.1}$ & $\textbf{76.66}^{\pm 0.3}$ & $75.00^{\pm 0.5}$ & $\textbf{75.44}^{\pm 0.2}$ & $79.94^{\pm 0.4}$ & $\textbf{80.14}^{\pm 0.1}$ \\ 
 
 & elem. $\ell_{\infty}$ & $76.82^{\pm 0.3}$ & $\textbf{77.62}^{\pm 0.2}$ & $75.43^{\pm 0.4}$ & $\textbf{76.68}^{\pm 0.1}$ & $72.74^{\pm 0.2}$ & $\textbf{74.50}^{\pm 0.4}$ & $79.47^{\pm 0.3}$ & $\textbf{79.64}^{\pm 0.2}$ \\ 
 
 & Fisher, $\eta=1.$ & $76.76^{\pm 0.2}$ & $\textbf{77.68}^{\pm 0.4}$ & $75.85^{\pm 0.2}$ & $\textbf{76.99}^{\pm 0.1}$ & $74.03^{\pm 0.2}$ & $\textbf{74.96}^{\pm 0.3}$ & $79.68^{\pm 0.2}$ & $\underline{\textbf{80.38}}^{\pm 0.3}$ \\ 
 
 & layer. $\ell_{2}$ & $76.76^{\pm 0.2}$ & $\underline{\textbf{77.91}}^{\pm 0.2}$ & $75.99^{\pm 0.2}$ & $\underline{\textbf{77.12}}^{\pm 0.2}$ & $74.65^{\pm 0.5}$ & $\textbf{75.28}^{\pm 0.2}$ & $78.25^{\pm 0.2}$ & $\textbf{79.86}^{\pm 0.3}$ \\ 
 
\hline
\multirow{7}{*}{\rotatebox[origin=c]{90}{CIFAR-10 \hspace*{0.3cm}}} & SGD & $94.29^{\pm 0.0}$ &  & $93.85^{\pm 0.3}$ &  & $93.82^{\pm 0.0}$ &  & $94.51^{\pm 0.1}$ &  \\ 
 
 & SAM & $94.88^{\pm 0.1}$ & $\underline{\textbf{95.19}}^{\pm 0.2}$ & $94.96^{\pm 0.0}$ & $\textbf{95.02}^{\pm 0.1}$ & $94.58^{\pm 0.1}$ & $\textbf{94.81}^{\pm 0.2}$ & $95.84^{\pm 0.2}$ & $\textbf{95.89}^{\pm 0.0}$ \\ 
 
 & elem. $\ell_2$ & $94.97^{\pm 0.1}$ & $\textbf{95.08}^{\pm 0.0}$ & $95.01^{\pm 0.1}$ & $\textbf{95.02}^{\pm 0.1}$ & $94.68^{\pm 0.0}$ & $\underline{\textbf{94.99}}^{\pm 0.1}$ & $95.76^{\pm 0.2}$ & $\textbf{95.86}^{\pm 0.2}$ \\ 
 
 & elem. $\ell_2, \eta=0.01$ & $94.95^{\pm 0.0}$ & $\textbf{95.13}^{\pm 0.1}$ & $94.87^{\pm 0.1}$ & $\underline{\textbf{95.12}}^{\pm 0.1}$ & $94.66^{\pm 0.1}$ & $\textbf{94.87}^{\pm 0.2}$ & $\textbf{95.92}^{\pm 0.3}$ & $95.85^{\pm 0.1}$ \\ 
 
 & elem. $\ell_{\infty}$ & $94.96^{\pm 0.1}$ & $\textbf{95.06}^{\pm 0.0}$ & $94.74^{\pm 0.2}$ & $\textbf{94.91}^{\pm 0.0}$ & $94.68^{\pm 0.1}$ & $\textbf{94.73}^{\pm 0.1}$ & $95.56^{\pm 0.2}$ & $\textbf{95.91}^{\pm 0.1}$ \\ 
 
 & Fisher, $\eta=1.$ & $95.07^{\pm 0.0}$ & $\textbf{95.17}^{\pm 0.0}$ & $94.77^{\pm 0.0}$ & $\textbf{95.10}^{\pm 0.2}$ & $94.55^{\pm 0.0}$ & $\textbf{94.91}^{\pm 0.1}$ & $95.65^{\pm 0.1}$ & $\underline{\textbf{96.00}}^{\pm 0.1}$ \\ 
 
 & layer. $\ell_{2}$ & $94.78^{\pm 0.1}$ & $\textbf{95.09}^{\pm 0.1}$ & $94.54^{\pm 0.1}$ & $\textbf{95.08}^{\pm 0.1}$ & $66.21^{\pm 48.7}$ & $\textbf{94.96}^{\pm 0.1}$ & $95.48^{\pm 0.2}$ & $\textbf{95.82}^{\pm 0.1}$ \\ 
 \hline
\end{tabular}
}
\label{tab:more-RN-models}
\end{table}

\begin{figure}[h]
    \centering
    \includegraphics[width=1.\textwidth]{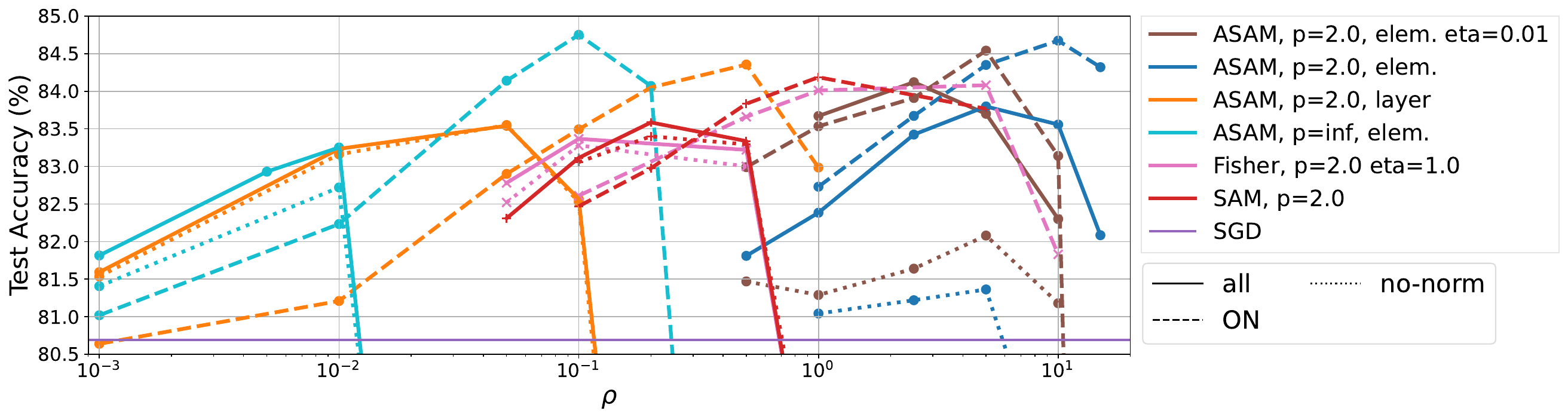}
    \caption{All considered SAM-variants and their \textit{SAM-ON} counterpart for a WRN-28 on CIFAR-100.}
    \label{fig:wrn-cifar100-allvariants}

    \centering
    \includegraphics[width=1.\textwidth]{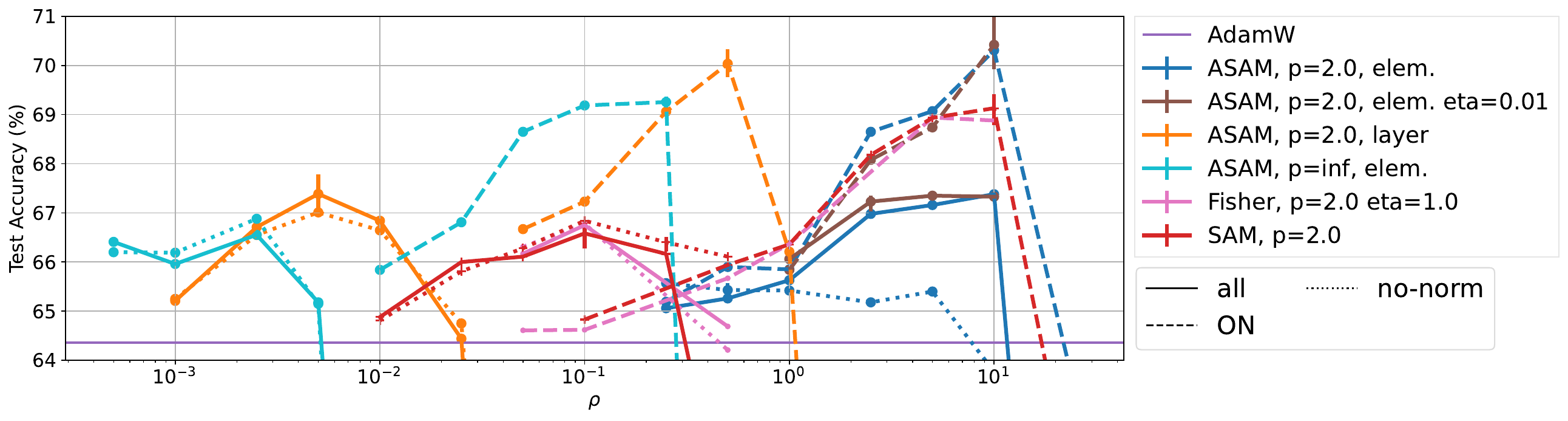}
    \caption{All considered SAM-variants and their \textit{SAM-ON} counterpart for a ViT-S on CIFAR-100.}
    \label{fig:vits-cifar100-allvariants}
\end{figure}

\subsection{Additional ablation studies for sparse SAM}\label{app:sparsesam}
In this section we provide additional ablation studies for sparsified perturbation approaches as discussed in Section \ref{sec:sparsity}. \citet{Mi2022} proposed two sparsified SAM (\textit{SSAM}) approaches: Fisher SSAM (\textit{SSAM-F}) and Dynamic SSAM (\textit{SSAM-D}). As an extension to Figure \ref{fig:ssam} for ResNet-18 on CIFAR-10 data in the main paper we provide an accompanying Figure \ref{fig:ssam_errorbar} which includes error bars and comparisons with the dynamic sparse perturbation approach (\textit{SSAM-D}) \cite{Mi2022}. We also provide additional results for \textit{SSAM-D} for a WideResNet-28 on CIFAR-100 data in Table \ref{tab:dssamWRN28}. We found optimal performance for \textit{SSAM} for 50\% sparsity and $\rho = 0.1$ on CIFAR-10 and $\rho = 0.2$ on CIFAR-100 (as also observed in \cite{Mi2022} for slightly different training settings). We find that although both \textit{SSAM} approaches can perform on par or even outperform regular \textit{SAM}, they are less effective than our \textit{SAM-ON} approach. The generalization gap increases even further when considering the same high sparsity levels as for \textit{SAM-ON}.
\ \\ 
\begin{figure}[h]
\ \\ \ \\
    \centering
    \includegraphics[scale=0.3]{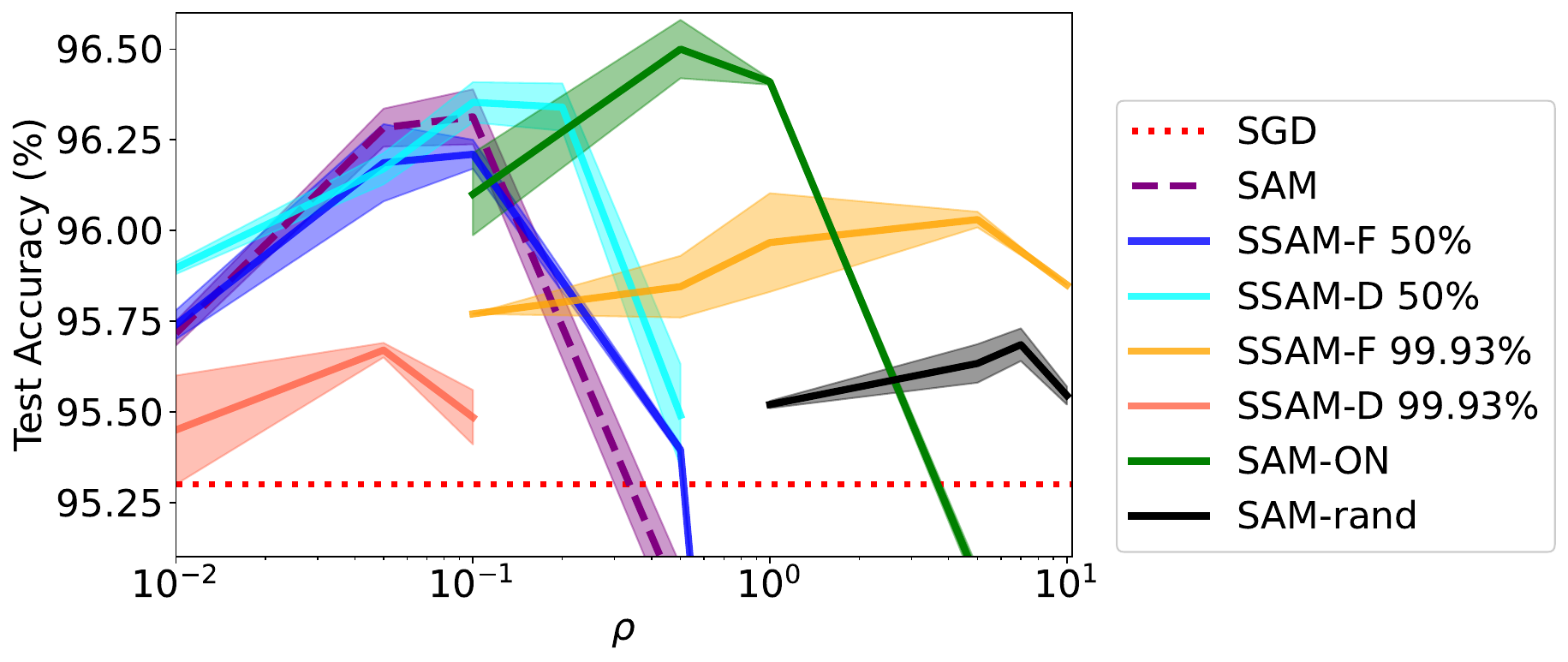}
    \vspace*{-0.35cm}
    \caption{\textit{SAM-ON} outperforms \textit{SSAM-F} and \textit{SSAM-D} \cite{Mi2022} (with different sparsity levels) and random mask \textit{SAM-rand} (same sparsity level 99.93\% as \textit{SAM-ON}) sparse perturbation approaches on CIFAR-10 for ResNet-18.} 
    \label{fig:ssam_errorbar}
    \end{figure}
\begin{table}[h]
    \centering
        \caption{Although \textit{SSAM-F} and \textit{SSAM-D} \cite{Mi2022} with different sparsity levels can outperform \textit{SAM-all} on CIFAR-100 with WRN-28, they are less effective than \textit{SAM-ON}.}
\begin{center}
\begin{tabular}{l|l|l|l|l l |l l } 
 & SAM & SAM-ON & SAM-rand & \multicolumn{2}{c}{SSAM-F} & \multicolumn{2}{c}{SSAM-D} \\ 
 Sparsity & 0\% & 99.95\%  & 99.95\%& 50\% & 99.95\% & 50\% & 99.95\% \\ 
\hline
Accuracy & 83.11\tiny{$\pm$0.3} & \textbf{84.19}\tiny{$\pm$0.2} & 80.97\tiny{$\pm$0.2} & 83.94\tiny{$\pm$0.1} & 83.14\tiny{$\pm$0.1} & 83.53\tiny{$\pm$0.1} & 81.01\tiny{$\pm$0.1} 
\end{tabular}
\end{center}
    \label{tab:dssamWRN28}
    \vspace{-0.5cm}
\end{table}

\subsection{Finetuning from ImageNet-21k} \label{app:fine-tune-ImageNet-vit}
Since ViTs are commonly trained on large-scale datasets and then fine-tuned, we investigate this scenario for \textit{SAM-ON}. In particular, we consider a ViT-S pretrained on ImageNet-21k from \cite{steiner2022train}. We fine-tune for 9 epochs with SGD for a range of SAM-variants with their respective \textit{SAM-ON} counterpart. For each setup, we probe three values of $\rho$ and report the best result in Table \ref{tab:vits-IN}. We find in this setting that \textit{SAM-ON} performs on par with \textit{SAM-all} although there are small differences across SAM variants: 
for layerwise-$\ell_2$ \textit{SAM-ON} performs slightly worse, whereas for all other variants \textit{SAM-ON} performs equally well or slightly better than \textit{SAM-all}. In all cases \textit{SAM-ON} outperforms plain SGD.

\begin{table}[h]
\vspace{0.3cm}

  \caption{Results for ImageNet-1k fine-tuning of a ViT-S-224 from a ImageNet-21k model.} 
\begin{center}
\begin{tabular}{|c|cc |cc |cc |cc |}
\hline
 SGD & \multicolumn{2}{c|}{SAM}&\multicolumn{2}{c|}{ASAM elem. $\ell_2$}&\multicolumn{2}{c|}{ASAM layer. $\ell_2$}&\multicolumn{2}{c|}{ASAM elem. $\ell_\infty$} \\ 
 
  & all & ON & all & ON & all & ON & all & ON  \\ 
 
\hline

 81.62 & \textbf{81.75} & \textbf{81.75} & 81.73 & \textbf{81.75} & \textbf{81.79} & 81.75 & \underline{\textbf{81.84}} & \underline{\textbf{81.84}} \\

\hline 

\end{tabular}
\end{center}

 \label{tab:vits-IN}
\vspace{-0.3cm}
\end{table}

\subsection{Adversarial robustness}\label{app:advRob}
Here, we provide additional results and extend the discussion on adversarial robustness from Section~\ref{sec:layernorm+vits}. 
In a study by \citet{wei2023SAMRobustness} SAM-trained models showed non-trivial robustness to small adversarial perturbations \citep{szegedy2014intriguing}. Since there are several works highlighting the role of normalization layers for adversarial robustness \citep{benz2021batchNormAdvVuln,xie2019intriguingPropAdvScale}, it is interesting to investigate whether the robustness properties of SAM can be preserved when training with SAM-ON instead of SAM-all. In Table~\ref{tab:robustness-imagenet} we report the adversarial robustness of the ViT-S trained from scratch on ImageNet (as reported in Section~\ref{sec:layernorm+vits} evaluated with the two white-box attacks from APGD, but for more radii). The SAM-ON models are not only better than the base optimizer, but consistently outperform the SAM-all models by a small margin. For a WRN-28-10 on CIFAR-100 the differences are less pronounced and often within the standard deviation (reported over 3 seeds in Table~\ref{tab:robustness}). SAM-ON also improves over SAM-all, but for the ASAM-elementwise-$\ell_\infty$ the all-variant is slightly better than the ON-variant.
Overall, we find that in order to get SAM-like improvements for adversarial robustness (as shown in \cite{wei2023SAMRobustness}) it is enough to only perturb the normalization layers in SAM, illustrating again their special role.

\begin{table}[h]
\tabcolsep=0.119cm
\centering
       \caption{\textbf{Adversarial robustness CIFAR-100:} Reported is robust accuracy (in \%) for a WRN-28 trained from scratch on CIFAR-100. Adversarial robustness is evaluated with the two whitebox APGD attacks from autoattack \cite{croce2020reliable}. }
\setlength{\tabcolsep}{3.5pt}
\begin{center}
\begin{tabular}{c  c @{\hskip 0.25in} l @{\hskip 0.25in} l @{\hskip -0.00in} l @{\hskip 0.25in} l @{\hskip -0.00in} l }

 &  & SGD &  \multicolumn{2}{c}{\hspace{-0.45in}SAM} & \multicolumn{2}{c}{ASAM-el.-$l_\infty$} \\
threat model & $\epsilon$ & & all & ON & all & ON \\
\hline

$\ell_2$ & $0.10$ & $18.14 ^{\pm 0.11}$ & $28.14 ^{\pm 1.09}$ & $\textbf{31.28 }^{\pm 0.50}$ & ${30.33 }^{\pm 0.80}$ & $30.16 ^{\pm 0.26}$ \\ 
 
$\ell_2$ & $0.20$ & $2.33 ^{\pm 0.11}$ & $5.39 ^{\pm 0.34}$ & $\textbf{6.62 }^{\pm 0.07}$ & $\textbf{6.63 }^{\pm 0.12}$ & $6.10 ^{\pm 0.18}$ \\

\hline
 
$\ell_\infty$ & $1/255$ & $10.29 ^{\pm 0.04}$ & $17.96 ^{\pm 1.08}$ & $\textbf{19.56 }^{\pm 0.33}$ & $\textbf{20.69 }^{\pm 0.81}$ & $18.63 ^{\pm 0.30}$ \\ 
 
$\ell_\infty$ & $2/255$ & $0.67 ^{\pm 0.01}$ & $1.96 ^{\pm 0.17}$ & $\textbf{2.16 }^{\pm 0.07}$ & $\textbf{2.62 }^{\pm 0.01}$ & $2.05 ^{\pm 0.17}$ \\

\hline

Clean acc. & & $80.7^{\pm0.2}$ & $83.1^{\pm0.3}$ & $\mathbf{84.2}^{\pm0.2}$ & $83.3^{\pm0.2}$ & $\mathbf{84.1}^{\pm0.2}$\\

\end{tabular}
\end{center}

    \label{tab:robustness}
\end{table}

\begin{table}[h]
\tabcolsep=0.119cm
\centering
       \caption{\textbf{Adversarial robustness ImageNet:} Reported is robust accuracy (in \%) for a ViT-S trained from scratch on ImageNet, as reported in Table ~\ref{tab:InVitsScratch}. Adversarial robustness is evaluated with the two whitebox APGD attacks from autoattack \cite{croce2020reliable}. }
\setlength{\tabcolsep}{3.5pt}
\begin{center}
\begin{tabular}{l c  l l l @{\hskip 0.3in} l  l  l }
&&\multicolumn{3}{c}{AdamW} & \multicolumn{3}{c}{Lion} \\ 
 & $\epsilon$ & vanilla & SAM-all & SAM-ON & vanilla & SAM-all & SAM-ON \\ 
 
\hline
$\ell_2$ & $0.25$ & $19.67 ^{\pm 0.47}$ & $37.53 ^{\pm 0.69}$ & $\textbf{41.16 }^{\pm 0.24}$ & $22.01 ^{\pm 0.78}$ & $38.52 ^{\pm 0.66}$ & $\textbf{43.12 }^{\pm 0.97}$ \\ 
 
$\ell_2$ & $0.50$ & $5.47 ^{\pm 0.18}$ & $17.71 ^{\pm 0.61}$ & $\textbf{22.72 }^{\pm 0.25}$ & $6.63 ^{\pm 0.46}$ & $19.03 ^{\pm 0.92}$ & $\textbf{24.27 }^{\pm 1.34}$ \\ 
 
$\ell_2$ & $1.00$ & $0.43 ^{\pm 0.09}$ & $3.34 ^{\pm 0.36}$ & $\textbf{5.58 }^{\pm 0.19}$ & $0.57 ^{\pm 0.07}$ & $3.98 ^{\pm 0.28}$ & $\textbf{6.64 }^{\pm 0.69}$ \\ 
 
$\ell_\infty$ & $0.25/255$ & $33.45 ^{\pm 0.80}$ & $48.08 ^{\pm 0.14}$ & $\textbf{49.34 }^{\pm 0.08}$ & $35.31 ^{\pm 0.08}$ & $49.57 ^{\pm 0.60}$ & $\textbf{51.37 }^{\pm 0.99}$ \\ 
 
$\ell_\infty$ & $0.5/255$ & $14.98 ^{\pm 0.18}$ & $29.68 ^{\pm 0.09}$ & $\textbf{32.46 }^{\pm 0.15}$ & $15.86 ^{\pm 0.13}$ & $31.68 ^{\pm 0.62}$ & $\textbf{34.23 }^{\pm 1.73}$ \\ 
 
$\ell_\infty$ & $1/255$ & $2.61 ^{\pm 0.16}$ & $8.64 ^{\pm 0.01}$ & $\textbf{10.82 }^{\pm 0.56}$ & $2.93 ^{\pm 0.29}$ & $10.02 ^{\pm 0.56}$ & $\textbf{12.03 }^{\pm 1.30}$ \\

\hline
\multicolumn{2}{l}{Clean acc.}  & $66.89^{\pm 0.04}$ & $71.47^{\pm 0.12}$ & ${71.37}^{\pm 0.026}$ & $68.20^{\pm 0.02}$ & $71.90^{\pm 0.19}$ & $\mathbf{72.64}^{\pm 0.14}$ \\ 

\end{tabular}
\end{center}

    \label{tab:robustness-imagenet}
\end{table}

\subsection{Machine translation task}\label{app:translation}
{
To probe the effectiveness of \textit{SAM-ON} outside the vision domain, we apply it to the IWSLT'14 DE-EN machine translation task, following the setup of \citet{Kwon2021}. We report the resulting Bleu scores in Table~\ref{tab:bleu}: \textit{SAM-all} and \textit{SAM-ON} perform similar (within standard deviations reported over 3 random seeds), both improving over the vanilla optimizer.
While being very limited in its scope, this experiment is a first hint that \textit{SAM-ON} might also be effective outside the vision domain. Proper evaluations, as for instance done in \cite{Bahri2022}, are required to confirm this for large-scale settings.
}
\begin{table}[h]
\centering
       \caption{IWSLT-DE-EN Bleu scores. Reported over 3 random seeds.}
\setlength{\tabcolsep}{4.5pt}
\begin{center}
\begin{tabular}{l  c c  }

vanilla & SAM-all & SAM-ON  \\ 
 
\hline
$34.56^{\pm0.11}$ & $34.83^{\pm0.10}$ & $34.95^{\pm 0.16}$ \\ 
 
\end{tabular}
\end{center}
    \label{tab:bleu}
\end{table}

\subsection{Weight distribution after training}\label{app:weight-distr}
In order to get a better understanding of the impact of \textit{SAM-ON} on $\gamma$ and $\beta$ (as defined in Eq. 
\ref{eq:norm}),  we train a WideResNet-28-10 with different SAM-variants and both \textit{SAM-ON} and \textit{all}. We show the distribution of $|w_i|$, i.e. the parameter magnitudes, at the end of training for different layer types in Figure \ref{fig:weight-distr-big}. Different to the discussion in Section \ref{sec:affine}, we show the $y$-axis on log-scale, in order to inspect more nuanced differences. For elementwise $\ell_2$ there is no strong change in the distribution of the BatchNorm parameters between \textit{all} and \textit{SAM-ON}. For elementwise $\ell_\infty$, layerwise $\ell_2$ and SAM, however, the magnitude of the BatchNorm parameters shifts clearly towards larger values, especially for the weight parameters. We note that this resembles a pattern we observed when comparing the optimal $\rho$-value for \textit{all} and \textit{SAM-ON} in %
Table \ref{tab:rho_tuned_rhovalues}: The optimal $\rho$ of elementwise $\ell_2$ did not change much for ResNet architectures, whereas for the other considered methods, it shifted towards larger values for \textit{SAM-ON}. Additionally and in contrast to the other methods, the elementwise $\ell_2$ variant showed a strong performance decrease in \textit{no-norm} (Figure \ref{fig:teaser}), indicating that it implicitly focuses on perturbing the BatchNorm layers already. 
We note that larger BatchNorm parameters do not necessarily indicate a functionally different network, since there are many reparameterization invariances in ReLU networks, some of which ASAM tries to leverage in its perturbation definition Eq.~\eqref{ASAMobjective}. Nevertheless, the scale of the network still has an impact on the training dynamics, since other methods like e.g. weight decay depend on it. We discuss the impact of weight decay further in Appendix \ref{app:wd+do}.

\begin{figure}[h!]
    \centering
    \centering
    \includegraphics[width=1.0\textwidth]{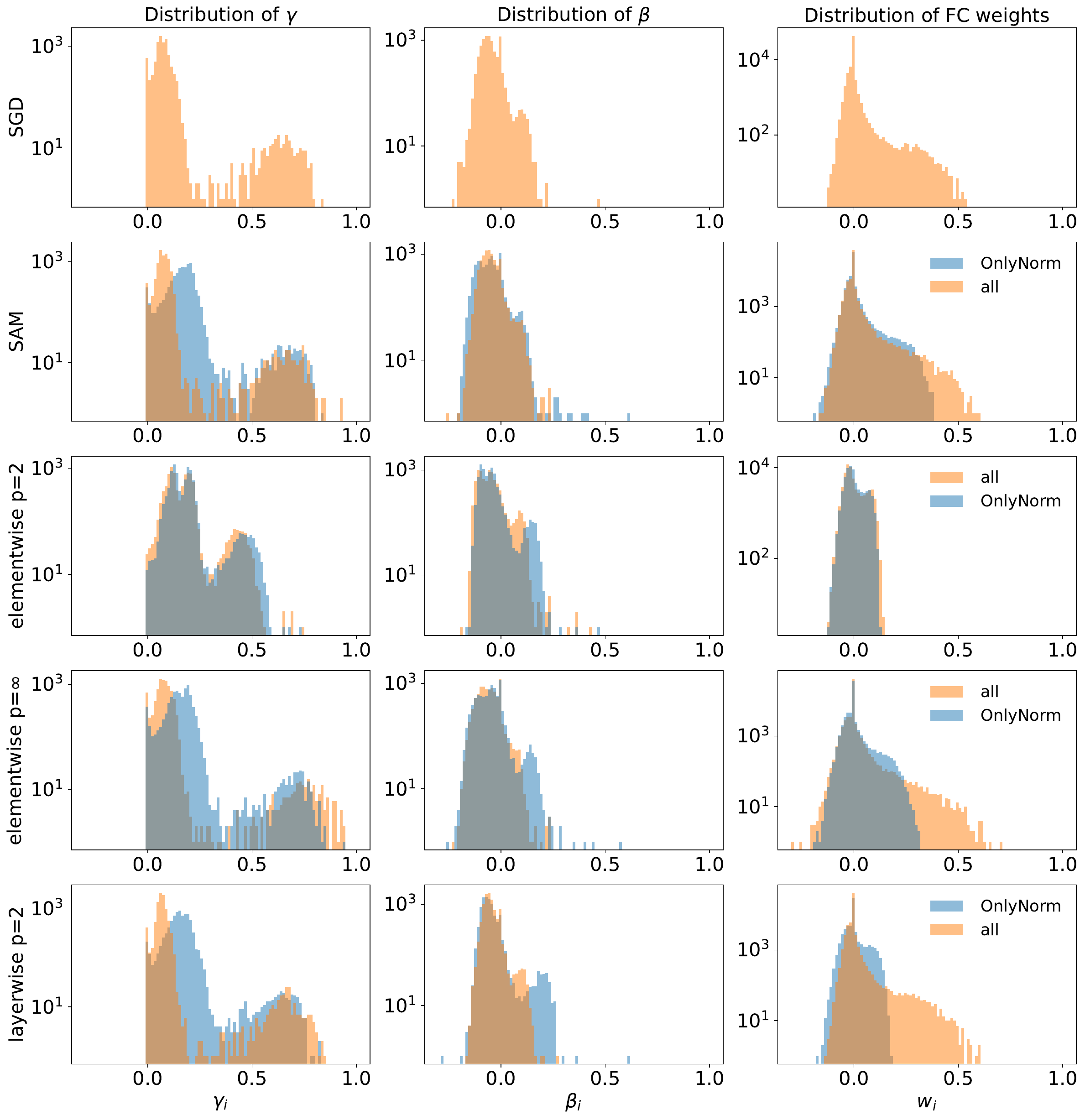}
    \vspace*{-0.4cm}
        \caption{SAM-ON leads to a shift in the distribution of $\gamma$.}
        \label{fig:weight-distr-big}
\end{figure}

\subsection{Removing the affine parameters} \label{app:fix-norm}
\citet{Frankle2021} found for SGD that fixing the normalization parameters typically decreases the generalization performance of networks. As an ablation, we therefore study the effect of SAM when the normalization weights are non-trainable. This is, we set $\gamma=1$ and $\beta=0$ and train the remaining parameters with SAM. The results are shown for a WRN-28 in Figure \ref{fig:fixBN-SAM}, where it can be seen that fixing the normalization parameters (\textit{fix-norm}) does \textit{not} lead to a decrease in the performance of SAM. 
We thus hypothesize that in certain settings, SAM might not leverage the expressive power of the normalization layers, which might contribute to the improved performance of \textit{SAM-ON}.

\begin{figure}[h!]
    \centering
     \includegraphics[width=0.48\linewidth]{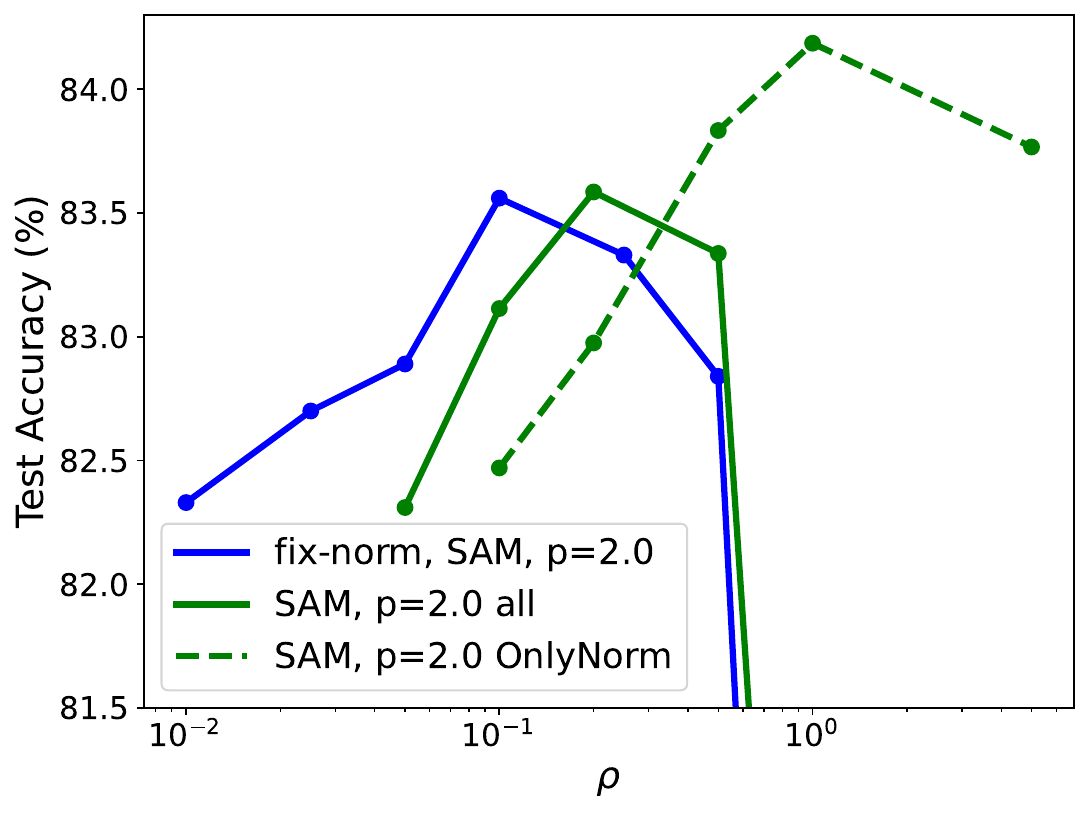}
    \caption{When training with SAM, fixing $\gamma=1,\beta=0$ (\textit{fix-norm}) barely changes the performance of the network. WRN-28, CIFAR-100.}
    \label{fig:fixBN-SAM}
\end{figure}

\subsection{Training BatchNorm and only BatchNorm}\label{app:trainOnlyBN}
The affine parameters of the normalization layers are relatively understudied in the literature. Recently, \citet{Frankle2021} were able to obtain surprisingly high performance for ResNet architectures by only training the BatchNorm layers (freezing all other parameters), illustrating their expressive power. 
We study the effect of SAM in this setting (i.e. when all parameters except for the BatchNorm layers are frozen) for a ResNet-101 and a WRN-28 on CIFAR-10 and find that SAM still aids generalization in this setting (Table \ref{tab:trainonlyBN}).

\begin{table}[h]
    \centering
        \caption{Effect of SAM when training \textit{only} BatchNorm layers, for networks trained on CIFAR-10.}
\begin{small}
\begin{center}
\begin{tabular}{l| l | c c  }
 Model & SGD & SAM $\rho = 0.01 $ & SAM $\rho = 0.05 $ \\ %
\hline
ResNet-101 & 78.75 & 78.63 & \textbf{79.27} \\ %
WRN-28 & 63.49 & \textbf{64.48} & 62.70 %

\end{tabular}
\end{center}

\end{small}
    \label{tab:trainonlyBN}
\end{table}

\subsection{Weight decay and dropout}\label{app:wd+do}
Here, we explore potential connections of \textit{SAM-ON} with weight decay and dropout. Since weight decay is sometimes applied to all network parameters, and sometimes normalization layers are omitted, it is worth investigating if the benefits of \textit{SAM-ON} can be attributed to its interaction with weight decay. To this end, we train a WRN-28 with SGD, \textit{SAM-all} and \textit{SAM-ON}, and apply weight decay to either all parameters, all \textit{except} the normalization layers, or not at all (Figure~\ref{fig:wd+do}, right). For each setting \textit{SAM-ON} outperforms SAM, outlining that its success should not be attributed to the interaction with weight decay. 

We further test if \textit{SAM-ON}-like performance can be achieved by simply applying stronger regularization and stochasticity to the normalization parameters. To this end, we apply dropout solely on the normalization layers (Figure~\ref{fig:wd+do}, left) and find that this is not the case. 

\begin{figure}[h!]
    \centering
    \includegraphics[width=0.4\textwidth]{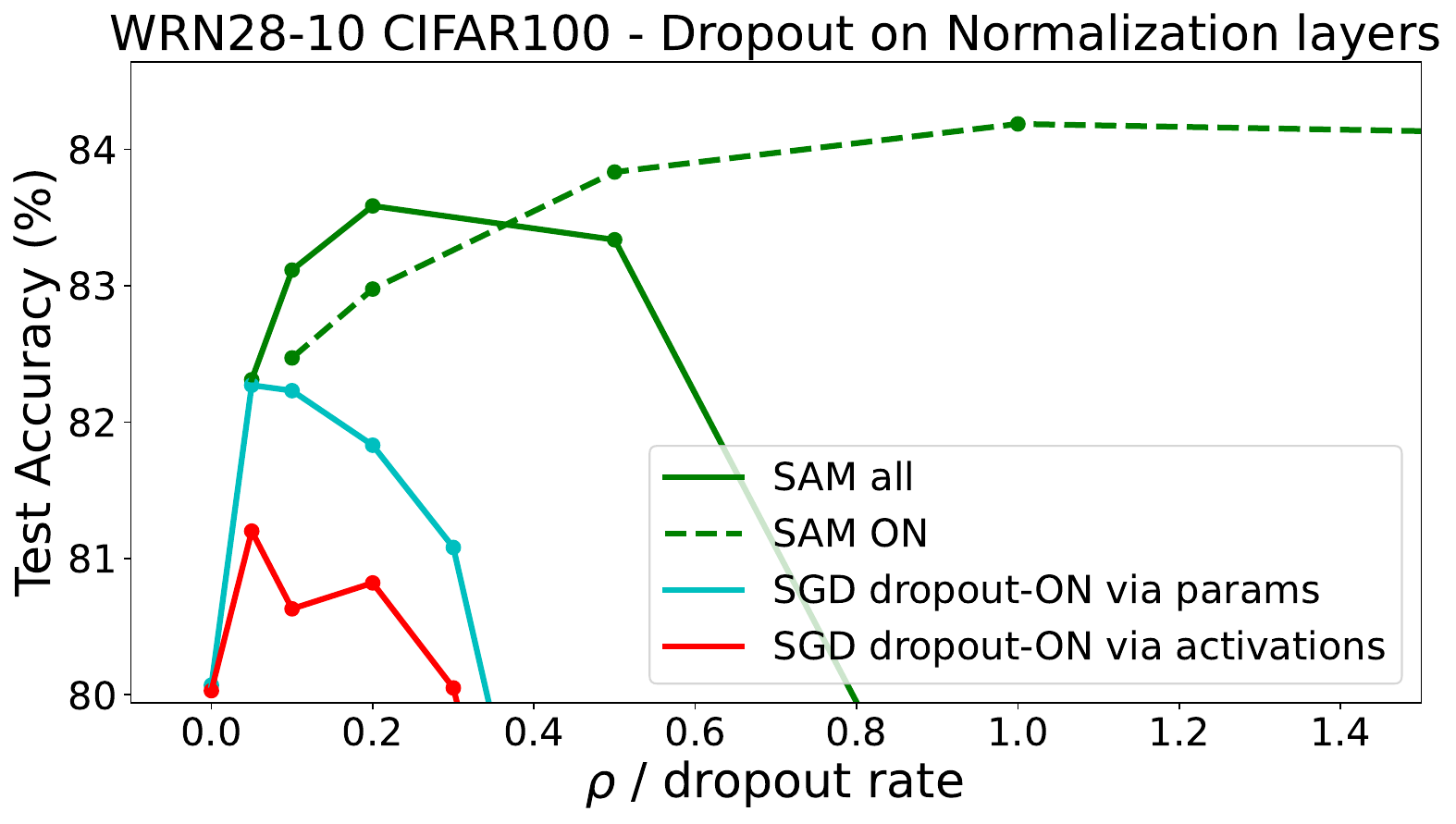}
    \includegraphics[width=0.55\textwidth]{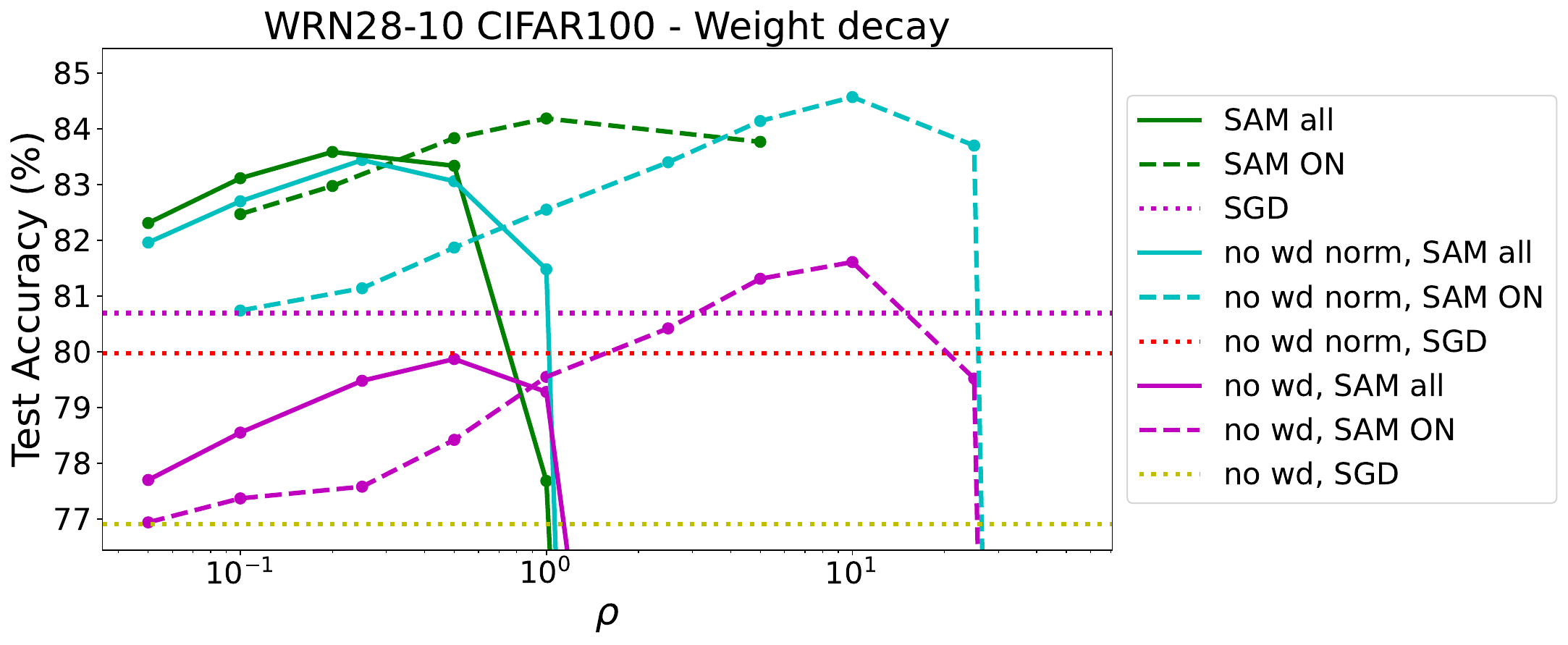}
    \caption{\textbf{Left:} Applying dropout only to the normalization layers (blue/red) performs worse than \textit{SAM-ON}. \textbf{Right:} \textit{SAM-ON} improves over \textit{SAM-all} irrespective of whether weight decay is applied to all parameters (green), all \textit{except} the normalization layers (blue) or not at all (yellow).}
    \label{fig:wd+do}
\end{figure}

\subsection{Details on sharpness evaluation}\label{app:sharpness-eval}
For the following discussion, we note that the term \textit{generalization} is sometimes used as the difference between train and test error, while in other cases people use it as a synonym for test error. Since in CIFAR settings the models achieve train error close to zero, the two definitions become equivalent.

Many studies have attempted to better understand the possible connection between the generalization of deep neural networks and %
the flatness of the loss-surface
\cite{hochreiter94,Jiang2020,Dziugaite2020,Keskar2017,Dinh2017}. 
Recently, \citet{Andriushchenko2023} conducted a large-scale study for a range of models, datasets, and sharpness-definitions, finding that
``while there definitely exist restricted settings where correlation between sharpness and generalization is significantly positive (e.g., for ResNets on CIFAR-10 with a specific combination of augmentations and mixup) it is not true anymore when we compare all models jointly'' and concluding ``that one should avoid blanket statements like \textit{flat minima generalize better}''. In order to evaluate sharpness, we therefore adopt their setup and choose the best-performing sharpness measure for CIFAR from their study, which is logit-normalized elementwise-adaptive worst-case-$\ell_\infty$- $m$-sharpness. This is, $m$-sharpness $s_w^m$ is defined as the largest possible change in loss within the adaptive perturbation model defined in \ref{ASAMobjective},
\begin{equation}\label{adaptive-Sharpness}
    s_w^m=\mathbb{E}_{x,y \sim D_m}\max_{||{T}_{w}^{-1}\mathbf{\epsilon}||_{p} \leq\rho}{L}(\mathbf{w+\mathbf{\epsilon}})-L(\mathbf{w})
\end{equation}
where ${T}_{w}^i=|w_i|$, $p=\infty$ and $D_m$ returns data batches of size $m$. $\rho$ here denotes the size of the ball over which sharpness is evaluated and is not to be confused with the $\rho$ from the SAM-algorithm. 
Like for ASAM \cite{Kwon2021}, the motivation behind adaptive sharpness measures is to make them invariant to reparameterizations of the network. Further, the logit-outputs of the network are normalized with respect to their $\ell_2$-norm in order to mitigate the scale-sensitivity of classification losses. In practice, \citet{Andriushchenko2023} compute $s_w^m$ over a subset of the train set of size $1024$ and use $m=128$, i.e. average 8 batches. We use a subset of size $2048$ in order to obtain more reliable sharpness estimates, and adopt $m=128$. The maximization in \eqref{adaptive-Sharpness} is performed with AutoPGD \cite{croce2020reliable}, a hyperparameter-free method designed for accurate estimation of adversarial robustness. It is to note that except for the logit-normalization, the sharpness definition reported in Table~\ref{tab:sharpness-cifar100} corresponds exactly to the perturbation model that ASAM elementwise $\ell_\infty$ uses, and hence the 1-step sharpness reported should be fairly close the the objective that ASAM elementwise $\ell_\infty$ actually minimizes during training. While ASAM elementwise $\ell_\infty$ yields slightly smaller sharpness values than the conventional SAM algorithm, the differences are rather small when compared to the significantly sharper \textit{SAM-ON} models. For the results in Table \ref{tab:sharpness-cifar100} in the main paper we tuned the sharpness radius $\rho$ such that we obtain sharpness values similar to those reported to yield the highest correlation in \citet{Andriushchenko2023}. In Table \ref{tab:sharpness-more-rhos} we report sharpness values for 
a ResNeXt-model, in addition to the WRN-28 from the main paper. In all cases the \textit{SAM-ON} models are sharper than the \textit{SAM-all} models yet generalize better. {In Table \ref{tab:rebuttal-more-sharpness-measures} we further report other sharpness measures without logit-normalization for a WRN-28. SAM-ON is sharper than SAM-all with respect to most metrics, although there exist some exceptions. It should however be stressed that many of those metrics did not show good correlation with generalization in the study by \citet{Andriushchenko2023}.
}

\begin{table}[h]
    \caption{Sharpness evaluation of both a WRN-28 and a ResNeXt. \textit{SAM-ON} is sharper than \textit{SAM-all} in all cases. Shown is 20-step logit-normalized $\ell_\infty$ sharpness from \cite{Andriushchenko2022}, averaged over three models per method. Dataset considered is CIFAR-100.}
    \centering
    \resizebox{1.\textwidth}{!}{
    \begin{tabular}{l l| c |c c |c c }
& \multirow{2}{*}{} & SGD & \multicolumn{2}{c}{SAM} & \multicolumn{2}{|c}{ASAM-el.-$\ell_\infty$} \\
& & & all & ON & all & ON \\
 
\hline 

\multirow{4}{*}{\rotatebox[origin=c]{90}{WRN-28}}&Test Accuracy (\%)& $80.71^{\pm0.2}$ & $83.11^{\pm0.3}$ & $\mathbf{84.19}^{\pm0.2}$ & $83.25^{\pm0.2}$ & $\mathbf{84.14}^{\pm0.2}$ \\ 
\cline{2-7}
& $\ell_\infty$-sharpness, $\rho=0.003$ & $0.071 ^{\pm 0.000}$ & $\mathbf{0.048} ^{\pm 0.001}$ & $0.090 ^{\pm 0.005}$ & $\mathbf{0.048} ^{\pm 0.001}$ & $0.078 ^{\pm 0.004}$ \\ 
 
& $\ell_\infty$-sharpness, $\rho=0.005$ & ${0.201} ^{\pm 0.001}$ & $\mathbf{0.139} ^{\pm 0.004}$ & $0.296 ^{\pm 0.018}$ & $\mathbf{0.124} ^{\pm 0.002}$ & $0.283 ^{\pm 0.011}$ \\ 
 
& $\ell_\infty$-sharpness, $\rho=0.007$ & $0.433 ^{\pm 0.002}$ & $\mathbf{0.309} ^{\pm 0.011}$ & $0.585 ^{\pm 0.018}$ & $\mathbf{0.255} ^{\pm 0.005}$ & $0.580 ^{\pm 0.020}$ \\ & & & & & &\\
 \hline
\multirow{4}{*}{\rotatebox[origin=c]{90}{ResNeXt}}& Test Accuracy (\%) & $80.16$ $^{\pm0.3}$ & $81.79$ $^{\pm0.4}$  & $\mathbf{82.22}$ $^{\pm0.2}$ & $81.02$ $^{\pm0.6}$ & $\mathbf{82.38}$ $^{\pm0.3}$ \\ 
\cline{2-7}
& $\ell_\infty$-sharpness, $\rho=0.001$ & $0.036 ^{\pm 0.001}$ & $\mathbf{0.029} ^{\pm 0.000}$ & $0.034 ^{\pm 0.000}$ & $\mathbf{0.026} ^{\pm 0.002}$ & $0.034 ^{\pm 0.001}$ \\

& $\ell_\infty$-sharpness, $\rho=0.003$ & $0.164 ^{\pm 0.005}$ & $\mathbf{0.117} ^{\pm 0.004}$ & $0.140 ^{\pm 0.002}$ & $\mathbf{0.099} ^{\pm 0.010}$ & $0.147 ^{\pm 0.001}$ \\ 
 
& $\ell_\infty$-sharpness, $\rho=0.005$ & $0.383 ^{\pm 0.011}$ & $\mathbf{0.252} ^{\pm 0.008}$ & ${0.291} ^{\pm 0.005}$ & $\mathbf{0.203} ^{\pm 0.021}$ & $0.312 ^{\pm 0.001}$ \\

\end{tabular}

    }
    \label{tab:sharpness-more-rhos}
\end{table}

\begin{table}[h]
    \centering
        \caption{Additional sharpness measures.  WRN-28 (no logitnorm).}%
\resizebox{1.\textwidth}{!}{
\setlength{\tabcolsep}{4.5pt}
\begin{tabular}{ll | c |c c |c c }
\multirow{2}{*}{}  & & SGD & \multicolumn{2}{c}{SAM} & \multicolumn{2}{|c}{ASAM-el.-$\ell_\infty$} \\
 & adaptive & & all & ON & all & ON \\
 
\hline 

Test Accuracy (\%)&  &  $80.71^{\pm0.2}$ & $83.11^{\pm0.3}$ & $\mathbf{84.19}^{\pm0.2}$ & $83.25^{\pm0.2}$ & $\mathbf{84.14}^{\pm0.2}$ 
\\
\hline
$\ell_2$ avg, $\rho=0.005$  & False & $1.358 ^{\pm 0.049}$ & $\textbf{0.515 }^{\pm 0.020}$ & $2.372 ^{\pm 0.071}$ & $\textbf{0.569 }^{\pm 0.012}$ & $2.141 ^{\pm 0.045}$ \\ 
 
$\ell_2$ avg, $\rho=0.1$  & True & $0.042 ^{\pm 0.001}$ & $\textbf{0.019 }^{\pm 0.001}$ & $0.022 ^{\pm 0.001}$ & $0.040 ^{\pm 0.001}$ & $\textbf{0.019 }^{\pm 0.001}$ \\ 
 
$\ell_\infty$ avg, $\rho=0.01$ & False & $2.643 ^{\pm 0.097}$ & $\textbf{1.264 }^{\pm 0.028}$ & $3.455 ^{\pm 0.050}$ & $\textbf{1.304 }^{\pm 0.007}$ & $3.259 ^{\pm 0.031}$ \\ 
 
$\ell_\infty$ avg, $\rho=0.2$  & True & $0.078 ^{\pm 0.001}$ & $0.035 ^{\pm 0.001}$ & ${0.034 }^{\pm 0.004}$ & $0.068 ^{\pm 0.003}$ & $\textbf{0.031 }^{\pm 0.001}$ \\ 
 
$\ell_2$-worst, $\rho=0.05$ & False & $0.501 ^{\pm 0.048}$ & ${0.655 }^{\pm 0.277}$ & $0.701 ^{\pm 0.057}$ & $0.768 ^{\pm 0.141}$ & $\textbf{0.313 }^{\pm 0.044}$ \\ 
 
$\ell_2$-worst, $\rho=0.25$  & True & $0.065 ^{\pm 0.008}$ & ${0.033 }^{\pm 0.004}$ & $0.037 ^{\pm 0.017}$ & ${0.056 }^{\pm 0.006}$ & $0.062 ^{\pm 0.001}$ \\ 
 
$\ell_\infty$-worst, $\rho=1e-05$  & False & $0.149 ^{\pm 0.003}$ & $\textbf{0.055 }^{\pm 0.002}$ & $0.144 ^{\pm 0.005}$ & $\textbf{0.050 }^{\pm 0.002}$ & $0.123 ^{\pm 0.007}$ \\ 
 
$\ell_\infty$-worst, $\rho=0.004$  & True & $0.537 ^{\pm 0.023}$ & $\textbf{0.262 }^{\pm 0.009}$ & $0.600 ^{\pm 0.053}$ & $\textbf{0.255 }^{\pm 0.011}$ & $0.505 ^{\pm 0.027}$ \\ 
\end{tabular}

}
    \label{tab:rebuttal-more-sharpness-measures}
\end{table}

\FloatBarrier
\section{Convergence Analysis}\label{app:conv}
We provide in this section a convergence analysis for \textit{SAM-ON} in the non-convex setting. Using standard assumptions we obtain a theorem which resembles findings for closely related methods such as found in \cite{Andriushchenko2022,Mi2022}. 

Our assumptions:
\begin{assumption} We assume function $f: \mathbb{R}^n\rightarrow \mathbb{R}$ to be $L$-smooth: there exists $L > 0$ such that
\begin{align}
    \|\nabla f(v)-\nabla f(w)\|_2\leq L\|v-w\|_2, \ \forall v,w\in\mathbb{R}^n.
\end{align}\label{Aassm:Lsmooth}
\end{assumption}
\begin{assumption}
There exists $M>0$ for any sample $x_i$ such that
\begin{align}
    \|\nabla f_{x_i}(w)\|^2_2 \leq M, \   \ \forall w \in \mathbb{R}^n.
\end{align}\label{Aassm:boundedvariance}
\end{assumption}

\begin{remark}\label{remark:fromLsmooth}
If Assumption \ref{Aassm:Lsmooth} holds ($L$-smoothness), then $\forall v,w\in\mathbb{R}^n$:
\begin{align}
    |f(v)-(f(w)+\nabla f(w)^T(v-w))|\leq \frac{L}{2}\|v-w\|^2_2.
\end{align}
This well-known result can be derived using the fundamental theorem of calculus and Cauchy-Schwartz. \\
\end{remark}
\begin{remark}\label{remark:boundedvariance}
Assumption \ref{Aassm:boundedvariance} guarantees that the variance of the stochastic gradient is less than $M$.
\end{remark}
\textit{SAM-ON.} In the following we shall denote the true gradient as $\nabla f(w)$ and the noisy observation gradient as $g(w)$. The gradient of the loss of the $i$th training example is denoted as $g_{x_i}(w)$.
We partition the neural network parameters layer-wise as $w = \{w_N,w_A\}$, with $w_N
\in \mathbb{R}^{n_N}, w_A
\in \mathbb{R}^{n_A}$, $n = n_N +n_A$, where $w_N$ represent the normalization layer parameters and $w_A$ all other layers.
The iteration for $w_N$ is:
\begin{align}
    w^{t+1/2}_{N} &= w^t_{N} + \rho \frac{g_{N,x_i}(w^t)}{\|g_{N,x_i}(w^t)\|}\nonumber \\
    w^{t+1}_{N} &= w^t_{N} - h\  g_{N,x_i}\left (w^{t+1/2} \right)
\end{align}
and for $w_A$ is:
\begin{align}
w^{t+1/2}_{A} &= w^t_{A} \nonumber \\
    w^{t+1}_{A} &= w^t_{A} - h\  g_{A,x_i}\left (w^{t+1/2} \right).
\end{align}
\begin{theorem}\label{th:mainconv}
Assuming \ref{Aassm:Lsmooth} and \ref{Aassm:boundedvariance}, $h \leq 1/L $, we obtain: %
\begin{align}
    \frac{1}{T}\sum^{T-1}_{t=0} \mathbb{E}\left [ \|\nabla f(w^t)\|^2 \right ] \leq \frac{ 2(f(w^0)-f(w^*))}{h T}+2Lh M+ L^2\rho^2 (1+Lh),
\end{align}
with $w^*$ the optimal solution to $f(w)$.
\end{theorem}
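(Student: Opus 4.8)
The plan is to run the standard descent-lemma argument for SGD, but carefully tracking the extra error introduced by the ascent step that is applied only to the normalization coordinates $w_N$. First I would write $w^{t+1} = w^t - h\, g_{x_i}(w^{t+1/2})$, where $w^{t+1/2}$ differs from $w^t$ only in the $N$-block and satisfies $\|w^{t+1/2}-w^t\| = \rho$ exactly (since the ascent direction is normalized, and $w_A$ is unchanged). Applying the $L$-smoothness consequence from Remark \ref{remark:fromLsmooth} with $v = w^{t+1}$ and $w = w^t$ gives
\begin{align*}
    f(w^{t+1}) \leq f(w^t) - h\,\nabla f(w^t)^T g_{x_i}(w^{t+1/2}) + \frac{Lh^2}{2}\|g_{x_i}(w^{t+1/2})\|^2 .
\end{align*}
Then I would take the conditional expectation over the sample $x_i$. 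The second-order term is bounded by $\tfrac{Lh^2}{2}M$ using Assumption \ref{Aassm:boundedvariance}. The crucial work is the cross term: I would decompose $g_{x_i}(w^{t+1/2}) = g_{x_i}(w^t) + \big(g_{x_i}(w^{t+1/2}) - g_{x_i}(w^t)\big)$, so that $\mathbb{E}[\nabla f(w^t)^T g_{x_i}(w^{t+1/2})] = \|\nabla f(w^t)\|^2 + \mathbb{E}[\nabla f(w^t)^T(g_{x_i}(w^{t+1/2})-g_{x_i}(w^t))]$, using unbiasedness of the stochastic gradient at $w^t$.

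Next I would control the perturbation remainder. By $L$-smoothness of each $f_{x_i}$ (inherited from the global assumption, or assumed per-sample as is standard) and $\|w^{t+1/2}-w^t\| = \rho$, we get $\|g_{x_i}(w^{t+1/2})-g_{x_i}(w^t)\| \leq L\rho$. Then bound the cross term by Young's inequality (or Cauchy–Schwarz followed by $ab \leq \tfrac12 a^2 + \tfrac12 b^2$): $|\nabla f(w^t)^T(g_{x_i}(w^{t+1/2})-g_{x_i}(w^t))| \leq \tfrac12\|\nabla f(w^t)\|^2 + \tfrac12 L^2\rho^2$. Also, for the quadratic term the same split gives $\|g_{x_i}(w^{t+1/2})\|^2 \leq 2\|g_{x_i}(w^t)\|^2 + 2L^2\rho^2 \leq 2M + 2L^2\rho^2$, which after multiplying by $\tfrac{Lh^2}{2}$ contributes $Lh^2 M + Lh^2 L^2\rho^2$. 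Collecting terms after taking expectations yields, for each $t$,
\begin{align*}
    \mathbb{E}[f(w^{t+1})] \leq \mathbb{E}[f(w^t)] - \frac{h}{2}\mathbb{E}\|\nabla f(w^t)\|^2 + \frac{h}{2}L^2\rho^2 + Lh^2 M + L^3 h^2 \rho^2 .
\end{align*}
Rearranging, summing over $t = 0,\dots,T-1$, telescoping the $f$ terms, using $f(w^T) \geq f(w^*)$, dividing by $hT/2$, and finally using $h \leq 1/L$ to write $Lh^2 = (Lh)h \le h$ and similar simplifications, gives the stated bound $\frac{1}{T}\sum_t \mathbb{E}\|\nabla f(w^t)\|^2 \leq \frac{2(f(w^0)-f(w^*))}{hT} + 2LhM + L^2\rho^2(1+Lh)$.

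The main obstacle is purely bookkeeping rather than conceptual: one must be careful that the displacement used in the descent lemma is $w^{t+1/2} \to w^{t+1}$ has size $h\|g_{x_i}(w^{t+1/2})\|$, while the displacement $w^t \to w^{t+1/2}$ relevant for the smoothness bound on the gradient difference has size exactly $\rho$ — and that the $A$-block contributes nothing to the latter because $w_A^{t+1/2} = w_A^t$. The fact that only $w_N$ is perturbed does not change the argument structurally; it only means the $L\rho$ bound on $\|g_{x_i}(w^{t+1/2})-g_{x_i}(w^t)\|$ is if anything conservative (the displacement lives in a low-dimensional subspace), so the same bound goes through verbatim. Care is also needed to decide whether $L$-smoothness is assumed for $f$ only or for each $f_{x_i}$; the cleanest route assumes it per-sample, matching the related analyses in \cite{Andriushchenko2022,Mi2022}.
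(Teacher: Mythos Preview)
Your proposal is correct and follows the same skeleton as the paper's proof: apply the descent lemma from $L$-smoothness, split the cross term $\nabla f(w^t)^T g_{x_i}(w^{t+1/2})$ into $\|\nabla f(w^t)\|^2$ plus a remainder controlled via Young's inequality and $\|w^{t+1/2}-w^t\|=\rho$, then telescope. The one structural difference is the treatment of the quadratic term $\tfrac{Lh^2}{2}\|g_{x_i}(w^{t+1/2})\|^2$. The paper rewrites it through the polarization identity $\|g\|^2 = \|\nabla f - g\|^2 - \|\nabla f\|^2 + 2\nabla f\cdot g$, which feeds part of it back into the cross term (producing the prefactor $(1-Lh)h$ on $\mathcal{B}$) and lands exactly on the constant $L^2\rho^2(1+Lh)$. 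Your first instinct---bounding the quadratic term directly by $M$ via Assumption~\ref{Aassm:boundedvariance}, which holds at \emph{every} $w$---is more elementary and in fact yields the tighter per-step inequality $\mathbb{E}[f(w^{t+1})]\le \mathbb{E}[f(w^t)]-\tfrac{h}{2}\mathbb{E}\|\nabla f(w^t)\|^2+\tfrac{Lh^2}{2}M+\tfrac{h}{2}L^2\rho^2$, hence the stronger final bound $LhM+L^2\rho^2$; the theorem then follows a fortiori. Your subsequent split $\|g_{x_i}(w^{t+1/2})\|^2\le 2M+2L^2\rho^2$ is therefore unnecessary, and if you insist on it you obtain $L^2\rho^2(1+2Lh)$ rather than $(1+Lh)$, which does not reduce to the stated constant without further loss---so drop the split and keep the direct bound. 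Finally, your caveat about needing per-sample $L$-smoothness is apt: the paper tacitly uses it when bounding $\|g(w^t)-g(w^{t+1/2})\|$, since $w^{t+1/2}$ depends on the sample $x_i$.
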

\begin{proof}
From Assumption \ref{Aassm:Lsmooth} and thus Remark \ref{remark:fromLsmooth} it follows that:
\begin{align}
    f(w^{t+1})&\leq f(w^t)+\nabla f(w^t)\cdot(w^{t+1}-w^t)+ \frac{L}{2}\|w^{t+1}-w^t\|^2 \\
    &\leq f(w^t)-h \nabla f(w^t) \cdot g_{x_i}\left (w^{t+1/2} \right )+ \frac{h^2 L}{2}\left \|g_{x_i}\left (w^{t+1/2} \right )\right \|^2 \\
     &= f(w^t)-h \nabla f(w^t) \cdot g_{x_i}\left (w^{t+1/2} \right ) \nonumber \\
        &\ \ \ \ \ + \frac{h^2 L}{2}\left ( \|\nabla f(w^t)-g_{x_i}(w^{t+1/2})\|^2 -\|\nabla f(w^t)\|^2+2\left ( \nabla f(w^t) \cdot g_{x_i}(w^{t+1/2})\right ) \right ) \nonumber \\
        &= f(w^t)-\frac{Lh^2}{2}\| \nabla f(w^t)\|^2+\frac{Lh^2}{2}\|\nabla f(w^t)-g_{x_i}(w^{t+1/2})\|^2 \nonumber \\
        &\ \ \ \ \ -(1-Lh)h \left ( \nabla f(w^t) \cdot g_{x_i}(w^{t+1/2})\right ) \\
        &\leq f(w^t)-\frac{Lh^2}{2}\| \nabla f(w^t)\|^2+Lh^2\|\nabla f(w^t)-g_{x_i}(w^{t})\|^2  \nonumber \\
        &\ \ \ \ \ +
        Lh^2\| g_{x_i}(w^t)-g_{x_i}(w^{t+1/2})\|^2%
        -(1-Lh)h \left (  
        \nabla f(w^t) \cdot g_{x_i}(w^{t+1/2}) %
        \right ).
\end{align}
Taking the double expectation gives (because unbiased gradient and Assumption \ref{Aassm:boundedvariance} and Remark \ref{remark:boundedvariance}):
\begin{align}
    \mathbb{E}[f(w^{t+1})] &\leq\mathbb{E}[f(w^{t})]-\frac{Lh^2}{2}\mathbb{E}\| \nabla f(w^t)\|^2+Lh^2M \nonumber \\
    &\ \ \ \ \ +\underset{\mathcal{A}}{\underbrace{Lh^2\| g(w^t)-g(w^{t+1/2})\|^2}}-(1-Lh)h \underset{\mathcal{B}}{\underbrace{\mathbb{E}\left [\nabla f(w^t)\cdot g(w^{t+1/2}) \right ]}}. \label{eq:expmid}
\end{align}
For term $\mathcal{A}$ we obtain using Assumption \ref{Aassm:Lsmooth}:
\begin{align}
  \mathcal{A} \leq  L^3h^2\| w^t-w^{t+1/2}\|^2 = L^3h^2\rho^2.
\end{align}
For term $\mathcal{B}$ we obtain:
\begin{align}
   \mathcal{B}
     &= \mathbb{E}\left [\{\nabla f_N(w^t), \nabla f_A(w^{t})\} \cdot \{ g_N(w^{t+1/2}), g_A(w^{t+1/2})\}\right ] \\
           &= \mathbb{E}[\nabla f_{A}(w^t) \cdot (g_{A}(w^{t+1/2})-g_{A}(w^t)+g_{A}(w^t))] \nonumber \\
           &\ \ \ \ + \mathbb{E}[\nabla f_{N}(w^t) \cdot (g_{N}(w^{t+1/2})-g_{N}(w^t)+g_{N}(w^t))]  \\
        &= \mathbb{E}\left [\|\nabla f(w^t)\|^2 \right ] \nonumber\\
        &\ \ \ \ + \underset{\mathcal{C}}{\underbrace{\mathbb{E}[\nabla f_{A}(w^t)\cdot (g_{A}(w^{t+1/2})-g_{A}(w^t))]+ \ \mathbb{E}[\nabla f_{N}(w^t)\cdot (g_{N}(w^{t+1/2})-g_{N}(w^t))]}}.
\end{align}
Using $xy \leq \frac{1}{2}\|x\|^2_2+\frac{1}{2}\|y\|^2_2$ and Assumption \ref{Aassm:Lsmooth}  we get for $\mathcal{C}$:
\begin{align}
    |\mathcal{C}| &\leq
    \frac{1}{2}\mathbb{E}\left [\|\nabla f(w^t)\|^2\right ] +\frac{L^2}{2}\|w^{t+1/2}-w^t\|^2 = \frac{1}{2}\mathbb{E} \left [ \|\nabla f(w^t)\|^2 \right ] +\frac{L^2\rho^2}{2}.
\end{align}

Plugging this into \eqref{eq:expmid} gives:
\begin{align}
    \mathbb{E}[f(w^{t+1})] 
   &\leq\mathbb{E}[f(w^{t})]-\frac{Lh^2}{2}\mathbb{E}\| \nabla f(w^t)\|^2+Lh^2M +L^3h^2\rho^2-(1-Lh)h\mathbb{E}\|\nabla f(w^t)\|^2\nonumber   \\
   &+(1-Lh)h\left( \frac{1}{2}\mathbb{E}  \|\nabla f(w^t)\|^2 +\frac{L^2\rho^2}{2} \right ) \\
    &\leq \mathbb{E}[f(w^{t})]-\frac{h}{2}\mathbb{E}\| \nabla f(w^t)\|^2+Lh^2M+\frac{1}{2}h L^2\rho^2 (1+Lh).
    \end{align}
In $T$ iterations
we obtain using a telescoping sum:
\begin{align}
        f(w^*)-f(w^0) &\leq \mathbb{E}[f(w^T)]-f(w^0) \nonumber \\
        &\leq -\frac{h}{2} \sum^{T-1}_{t=0} \mathbb{E}\left [ \|\nabla f(w^t)\|^2 \right ] +Lh^2M T+\frac{1}{2}h L^2\rho^2 (1+Lh)T. \label{eq:sumoverT} 
\end{align}
This gives Theorem \ref{th:mainconv}.
\end{proof}

\end{document}